\documentclass{article}

     \PassOptionsToPackage{numbers, compress}{natbib}
\usepackage[preprint]{neurips_2026}

\usepackage[utf8]{inputenc} 
\usepackage[T1]{fontenc}    
\usepackage{hyperref}       
\usepackage{url}            
\usepackage{booktabs}       
\usepackage{amsfonts}       
\usepackage{amsmath}
\usepackage{amssymb}
\usepackage{amsthm}
\usepackage{nicefrac}       
\usepackage{microtype}      
\usepackage{xcolor}
\usepackage{mathtools}
\usepackage{comment}
\usepackage{graphicx} 
\usepackage{subfigure} 
\usepackage{pdflscape}
\usepackage{algorithm}
\usepackage{algorithmic}
\usepackage{bm}
\usepackage{soul}
\usepackage{listings}
\usepackage{bbm}
\usepackage{multirow}
\usepackage{multicol}

\usepackage{float}
\usepackage{wrapfig}
\usepackage{framed}

\usepackage{rotating}

\usepackage{wrapfig}

\newcommand\Tstrut{\rule{0pt}{2.2ex}}         

\newtheorem{theorem}{Theorem}[section]

\newcommand{\INDSTATE}[1][1]{\STATE\hspace{#1\algorithmicindent}}

\theoremstyle{plain}

\newtheorem{lemma}[theorem]{Lemma}

\theoremstyle{definition}

\theoremstyle{remark}

\title{
Large-scale Score-based Variational Posterior Inference for Bayesian Deep Neural Networks
}

\author{%
  Minyoung Kim\thanks{\url{https://minyoungkim21.github.io/}} \\
  Samsung AI Center Cambridge\\
  United Kingdom \\
  \texttt{mikim21@gmail.com} \\
}

\begin{document}

\vspace{-1.0em}

\maketitle

\vspace{-1.0em}

\begin{abstract}
\vspace{-1.0em}
Bayesian (deep) neural networks (BNN) are often more attractive than the vanilla point-estimate deep learning in various aspects including uncertainty quantification, robustness to noise, resistance to overfitting, and more. The variational inference (VI) is one of the most widely adopted approximate inference methods. Whereas the ELBO-based variational free energy method is a dominant choice in the literature, in this paper we introduce a score-based alternative for BNN variational inference. Score-based VI can address the known issue of mode collapsing in ELBO-based VI. Although several score-based VI methods have been proposed in the community, most are not adequate for large-scale BNNs for various computational and technical reasons. We propose a novel scalable VI method where the learning objective combines the score matching loss and the proximal penalty term in iterations, which helps our method avoid the reparametrized sampling, and allows for noisy unbiased mini-batch scores through stochastic gradients. This in turn makes our method scalable to large-scale neural networks including Vision Transformers. 
On several benchmarks including visual recognition and time-series forecasting with large-scale deep networks, we empirically show the effectiveness of our approach.
\end{abstract}

\section{Introduction}\label{sec:intro}
\vspace{-1.0em}
In machine learning, Bayesian modeling is a very powerful and promising tool in expressing domain knowledge in the form of prior, where based on the Bayes rule the posterior inference is a principled machinery of reasoning on the most plausible possible states of the unknowns~\citep{mackay,gha}. 
Bayesian neural networks (BNN)~\citep{buntine,mackay92,neal95} have gained significant attention over the last decade along with the trend of the popularity of deep learning. BNNs are Bayesian models where the parameters of the neural networks are treated as random variables, prior preferences on the parameters are imposed in the form of prior distributions, and the predictions of the neural networks are linked with problem-specific losses as likelihood functions. 
Like most non-trivial Bayesian models, however, posterior inference in BNNs is generally infeasible, 
and several approximate posterior inference methods have been proposed including: variational inference~\citep{bayes_by_backprop}, MC-dropout~\citep{mc_dropout}, Laplace approximation~\citep{pmlr-v139-immer21a}, sampling-based methods~\citep{sgld,pmlr-v119-wenzel20a}, 
the ensemble methods~\citep{ensemble}, to name just a few.

In this paper we are particularly interested in variational inference for large-scale deep networks and data. Variational inference (VI) is one of the most widely adopted methods that aims to approximate the target posterior by a distribution from a tractable density family, 
turning probabilistic inference into an optimization problem~\citep{jordan99,wainwright08,blei17}. 
Perhaps the two most popular VI approaches are the variational free energy methods (also known as ELBO) that minimize the KL divergence~\citep{bayes_by_backprop,advi} and the score-based approaches~\citep{fisher,score} that aims to directly minimize the difference between the target score and model's score. 
Although the ELBO-based 
method is a dominant choice, and has been applied successfully to large-scale BNNs~\citep{bayesdll}, it is known to have the mode collapsing issue due to its reverse KL objective. Score-based VI can address the issue, however, the previous score-matching approaches may not be directly applicable to large-scale BNNs for various difficulties including: computationally prohibitive Hessian through neural nets with inversion and being unable to support mini-batch stochastic gradients. 
In this paper we aim to propose a novel score-based alternative for large-scale BNN variational inference. 

Our proposed method is motivated by the recent Gaussian score matching (GSM) methods~\citep{gsmvi,bam}. Under the assumption of Gaussian variational density family, they admit iterative closed forms with several merits, for instance, faster convergence in practice than ELBO-based methods with strong theoretical guarantee. 
However, there are several drawbacks. Firstly, they are limited to small-scale VI problems due to matrix inverse or multiplication operations that take cubic or quadratic cost in state dimension~\citep{bam} while requiring access to {\em exact} scores and being unable to deal with noisy mini-batch stochastic-gradient scores. The latter implies that in BNNs, we need to access entire data/evidence for a single update (e.g., waiting for a full epoch of mini-batch iterations), making GSM inapplicable to large-scale data with large networks like Transformers~\citep{vaswani2017attention,vit} or ResNets~\citep{resnet}. 
Secondly, the variational density family in GSM is confined to Gaussians, which may lead to suboptimal performance if the true posterior is substantially different from Gaussians. 
%

We propose a novel score-matching VI algorithm 
applicable to large-scale BNN posterior inference. 
Our optimization objective combines the score-matching loss and the proximal penalty term in iterations, which helps our method avoid the reparametrized sampling, and allows for noisy unbiased mini-batch scores through stochastic gradients. 
Unlike GSM, our approach 
is scalable to large-scale problems including Bayesian deep models with Vision Transformers (ViT)~\citep{vit}. 
%
%
On several benchmarks ranging from visual recognition with 
large-scale ResNets~\citep{resnet} and ViT~\citep{vit} to time-series forecasting with more domain-tailored deep networks~\citep{koopa}, 
we show effectiveness of our approach.

\section{Problem Setup and Background}\label{sec:setup}
\vspace{-1.0em}
Generally we consider a target distribution 
denoted by $\pi(\theta)$ where $\theta\in\mathbb{R}^d$ is the state random variable. 
We assume that $\pi(\theta)$ is intractable in that sampling from it is infeasible while computing the log-density $\log \pi(\theta)$ is only feasible up to some unknown constant. 
{\setlength{\abovedisplayskip}{3pt}
\setlength{\belowdisplayskip}{3pt}
\begin{align}
\pi(\theta) = \frac{\Phi(\theta)}{Z}, \ \ \ \ Z = \int \Phi(\theta) d\theta
\label{eq:energy_form}
\end{align}
}
where $\Phi(\theta)\!\geq\!0$ is the potential function 
that is easy to compute, but the normalizing constant 
$Z$ is 
infeasible. A good example which we focus on in this paper is the Bayesian posterior inference: 
{\setlength{\abovedisplayskip}{3pt}
\setlength{\belowdisplayskip}{3pt}
\begin{align}
p(\theta|D) = \frac{p(\theta) p(D|\theta)}{p(D)}, \ \ p(D) = \int\! p(\theta) p(D|\theta) d\theta
\label{eq:bayesian_posterior}
\end{align}
}
where $D$ is observed data (i.e., evidence). 
By construction, the prior $p(\theta)$ and the likelihood $p(D|\theta)$ can be made easy to evaluate.  The potential $\Phi(\theta):= p(\theta) p(D|\theta)$ is then tractable, but the normalizer $Z := p(D)$ is often infeasible to compute. 
In this setup, however, the {\em score} of the target $s(\theta) := \nabla_\theta \log \pi(\theta) = \nabla_\theta \log \Phi(\theta)$ can be computed tractably where $\pi(\theta) := p(\theta|D)$. 
In BNNs, $\theta$ represents the parameters of the underlying neural network.


\textbf{Variational inference (VI)} aims to approximate the target $\pi(\theta)$ by a density function from some tractable family $\mathcal{Q}$ (e.g., Gaussian). That is, $q_\lambda(\theta) \approx \pi(\theta)$ where $q_\lambda(\theta)\in\mathcal{Q}$ is a parametric density function with parameters $\lambda$ (e.g., for the Gaussian, $q_\lambda(\theta) = \mathcal{N}(\theta; m, V)$ with $\lambda=(m,V)$). This turns the approximate inference into an optimization problem of finding the best $\lambda$. 
We review the two most popular VI algorithms, ELBO-based (Sec.~\ref{sec:elbo_bg}) and score-based (Sec.~\ref{sec:smvi_bg}) ones. 

\textbf{Noisy or mini-batch scores.}
In practical large-scale data scenarios, it is often the case that we only have a (noisy) mini-batch version of the score, denoted as $\hat{s}(\theta)$, instead of the true one $s(\theta) := \nabla_\theta \log \pi(\theta)$. Usually we have the unbiasedness property, i.e., $\mathbb{E}[\hat{s}(\theta)] = s(\theta)$ where the expectation is taken over the sampling randomness. In Bayesian inference, it originates from the mini-batch likelihood evaluation with large data/evidence $D$~\citep{doubly_stoch}. 
When $D$ is large, the likelihood $\log p(D|\theta)$ may not be computed exactly since it has to wait for the entire sweep through the data; Instead one can take a mini-batch $B$, a small random subset of $D$ with $|B|\!\ll\!|D|$ and approximate $\log p(D|\theta) \approx \frac{|D|}{|B|} \log p(B|\theta)$ as an unbiased estimate. 

Although the noisy scores can be handled naturally in ELBO-based VI with its stochastic gradient version, we find that existing score-matching VI methods are unable to deal with noisy scores properly as discussed in Sec.~\ref{sec:smvi_bg}. In our proposed stochastic-gradient score-matching algorithm (Sec.~\ref{sec:main}) we handle the noisy mini-batch scores effectively, which is one of our main contributions in this paper.


\subsection{Background on ELBO-based VI
}\label{sec:elbo_bg}

The variational density $q_\lambda(\theta)$ is selected by minimizing the (reverse) KL divergence $\textrm{KL}(q_\lambda(\theta)||\pi(\theta))$. This is equivalent to minimizing the so-called {\em Negative ELBO}:
\begin{align}
\textrm{Negative-ELBO} := \mathbb{E}_{q_\lambda(\theta)}[\log q_\lambda(\theta) - \log \Phi(\theta)]
\label{eq:negative_elbo}
\end{align}
The automatic differentiation VI (ADVI)~\citep{advi} solves minimization of (\ref{eq:negative_elbo}) using Monte-Carlo estimation with the reparametrized sampling trick~\citep{vae14,doubly_stoch,rezende}. 
More specifically, denoting by $\{\theta^{(i)}(\lambda)\}_{i=1}^S$ the $S$ i.i.d.~reparametrized samples from $q_\lambda(\theta)$ (note: the samples are functions of $\lambda$), the negative ELBO objective is approximated as: $\frac{1}{S} \sum_{i=1}^S \{ \log q_\lambda(\theta^{(i)}(\lambda)) - \log \Phi(\theta^{(i)}(\lambda)) \}$, which is then minimized by gradient descent using any automatic differentiation tool.
By its form, ADVI naturally admits stochastic gradient noisy scores since the gradient of the objective has the score $s(\theta) = \nabla_\theta \log \pi(\theta) = \nabla_\theta \log \Phi(\theta)$ in its linear form. 
One known caveat of ELBO-based VI is that the reverse KL objective is prone to lead to mode collapsing, i.e., failing to capture all modes of the target density. The score-based VI in the next section can potentially address this issue.


\subsection{Background on Score-based VI
}\label{sec:smvi_bg}

The idea of score-based VI is to match the scores between the true and variational distributions, that is, $\nabla_\theta \log q_\lambda(\theta) \approx \nabla_\theta \log \pi(\theta)$. 
The traditional approaches which we dub \textbf{Fisher} and \textbf{Score} aim to minimize the $q_\lambda$-expected least squares objectives, more specifically,
\begin{align}
loss_{Fisher/Score} := \mathbb{E}_{q_\lambda(\theta)} || \nabla_\theta \log q_\lambda(\theta) - s(\theta) ||^2
\label{eq:score_fisher}
\end{align}
where the norm is Euclidean for \textbf{Fisher}~\citep{fisher} and the $\textrm{Cov}(q_\lambda)$-induced one for \textbf{Score}~\citep{score}.
The optimization of (\ref{eq:score_fisher}) can be done by Monte-Carlo 
reparametrized sampling similarly as ADVI.

Recently, Gaussian score-matching idea has been introduced, inspired by online incremental algorithms for classifier estimation~\citep{pa,cw}. The first attempt called GSM~\citep{gsmvi} formulates a sequential constrained optimization, where they incrementally update $\{\lambda_t\}_t$ by solving:
\begin{align}
\lambda_{t+1} = \arg\min_\lambda \textrm{KL}(q_{\lambda_t} || q_\lambda) \ \ \textrm{s.t.} \label{eq:gsm} 
\ \ \nabla_\theta \log q_\lambda(\theta_t^{(i)}) = s(\theta_t^{(i)}), \ \theta_t^{(i)}\!\sim\! q_{\lambda_t}(\theta), \ i\!=\!1,\dots,S 
\end{align}
The follow-up work Batch-and-Match (BaM)~\citep{bam} turns the constrained optimization of (\ref{eq:gsm}) into unconstrained Lagrangian form (with coefficients $\gamma_t$):
\begin{align}
\lambda_{t+1} = \arg\min_\lambda \ \textrm{KL}(q_{\lambda_t} || q_\lambda) \ + \label{eq:bam} 
\ \frac{\gamma_t}{S} \sum_{i=1}^S \big\vert\big\vert \nabla_\theta \log q_\lambda(\theta_t^{(i)}) \!-\! s(\theta_t^{(i)}) \big\vert\big\vert_{\textrm{Cov}(q_\lambda)}^2, \ \theta_t^{(i)}\!\sim\! q_{\lambda_t}(\theta) 
\end{align}
Both GSM and BaM are confined to Gaussian $q_\lambda$, and lead to closed-form solutions for each iteration $t$ using the Karush–Kuhn–Tucker conditions. However, they do not allow for stochastic-gradient noisy score $\hat{s}(\theta)$, not suitable for large-scale BNN variational inference. 



\section{Our Approach}\label{sec:main}



To address the drawbacks of previous score-based VI methods, while specifically targeting a large-scale score-based VI, we propose a novel proximal stochastic-gradient score-based VI algorithm that can effectively deal with noisy scores. We first consider the noise-free score case with reasonable theoretical convergence arguments, then deal with noisy unbiased score cases. We often drop $\lambda$ in notation $q_\lambda(\theta)$ for simplicity.

At iteration $t$, let $q_t(\theta)$ be our current estimate. We solve the following optimization problem whose solution becomes our next iterate $q_{t+1}(\theta)$. 
\begin{align}
&q_{t+1} \ = \ \arg\min_{q\in\mathcal{Q}} \ \mathbb{E}_{\theta\sim q_t(\theta)}\!\Big[
\alpha_t \big|\big| \nabla_\theta \log q(\theta) \! - \! \nabla_\theta \log q_t(\theta) \big|\big|_2^2 \ 
+ \ 
\big|\big| \nabla_\theta \log q(\theta) - s(\theta) \big|\big|_2^2
\Big]
\label{eq:prox_sm}
\end{align}
We can see that it is proximal score matching by enforcing the next iterate to remain close to the current one $q_t$ in scores. We have adaptive proximal impact $\alpha_t$, which initially is close to 0, but as iteration goes on, $\alpha_t$ increases and converges to some constant $\alpha_\infty \leq 1$. 
We use the linear scheduling $\alpha_t = \frac{t}{T}$ with the maximum number of iterations $T$. Clearly $\alpha_\infty \to 1$ as $T\to\infty$. 

The proximal term in (\ref{eq:prox_sm}) allows the expectation to be taken with respect to (constant) $q_t$ instead of (variable) $q$, which circumvents differentiating twice over $\theta$ unlike previous {\em Score} and {\em Fisher}, 
\begin{wrapfigure}[10]{r}{0.533\textwidth}
\centering
\begin{minipage}{0.533\columnwidth}
\vspace{-1.25em}
\begin{algorithm}[H]
\caption{Our 
Score-Matching 
VI Algorithm.
}
\label{alg:sgsmvi}
\begin{small}
\begin{algorithmic}
\STATE \textbf{Input:} $T=$ $\#$ iterations, $N=$ $\#$ steps per iteration, 
\INDSTATE[2] \ \ $S=$ $\#$  MC samples, $\alpha_t=t/T$, initial $q_0(\theta)$.
\STATE \textbf{For} $t=0, 1, \dots, T-1$:
    \INDSTATE[1] Sample $\theta^{(i)} \sim q_t(\theta)$, \ \ $i=1,\dots,S.$ 
    \INDSTATE[1] 
    Compute the noisy mini-batch score $\hat{s}^{(i)}$ at $\theta^{(i)}$.
    \INDSTATE[1] \textbf{For} step $=0, 1, \dots, N-1$:
        \INDSTATE[2] Update $\lambda \!\leftarrow\! \lambda \!-\! \eta \frac{d loss(\lambda)}{d \lambda}$ as (\ref{eq:noisy_loss_grad}) with $\theta^{(i)}$, $\hat{s}^{(i)}$.
    \INDSTATE[1] $q_{t+1} := q_\lambda$.
\end{algorithmic}
\end{small}
\end{algorithm}
\end{minipage}
\end{wrapfigure} 
and removes the need for reparametrized sampling unlike ADVI. Another thing to note is that we use the Euclidean norm instead of $\textrm{Cov}(q)$-induced norm.  BaM used the latter to ensure closed-form updates for Gaussian family. However, it might incur issues for certain initial choices of $q$. For instance, if $\textrm{Cov}(q)$ initially has eigenvalues close 0, we only have very slow update of $q$ along the corresponding eigenvector directions. This initial iterate sensitivity of BaM is empirically illustrated in Sec.~\ref{sec:expmt_gaussian} (Fig.~\ref{fig:p_gaussian} (Bottom row)).


\textbf{Convergence analysis. 
} 
We have the following 
theorem on the convergence of our method:
\theoremstyle{plain}
\begin{theorem}
Assuming that\footnote{Both assumptions can be thought to be ideal. Especially the second assumption may not be satisfied in practice depending on the chosen variational family $\mathcal{Q}$.
} 
i) the minimization in (\ref{eq:prox_sm}) is exact in each iteration, and ii) the variational family $\mathcal{Q}$ is rich enough to represent smooth vector fields $F:\mathbb{R}^d \to \mathbb{R}^d$, specifically for any $\theta \in \Theta$, $q\in\mathcal{Q}$, and $\beta\in[0,1]$, we have $q'\in\mathcal{Q}$ such that 
\begin{align}
\nabla_\theta \log q'(\theta) \simeq \beta \nabla_\theta \log q(\theta) + (1-\beta) s(\theta)
\label{eq:assumption2}
\end{align}
Then for $\alpha_t = \frac{t}{T}$ we have $q_T(\theta) \to \pi(\theta)$ as $T\to\infty$.
\label{appthm:convergence}
\end{theorem}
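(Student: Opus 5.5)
The plan is to solve the proximal problem (\ref{eq:prox_sm}) pointwise in the score field and then unroll the resulting linear recursion. Write $g(\theta) := \nabla_\theta \log q(\theta)$ and $g_t(\theta) := \nabla_\theta \log q_t(\theta)$. The integrand at a fixed $\theta$ is
$$\alpha_t\|g - g_t\|_2^2 + \|g - s\|_2^2,$$
a strictly convex quadratic in $g(\theta)$. Its unconstrained minimizer at each $\theta$ is
\begin{align*}
g^\star(\theta) = \frac{\alpha_t}{1+\alpha_t}\, g_t(\theta) + \frac{1}{1+\alpha_t}\, s(\theta).
\end{align*}
This has exactly the form (\ref{eq:assumption2}) with $\beta = \beta_t := \alpha_t/(1+\alpha_t) \in [0,1/2]$. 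By assumption ii) some $q'\in\mathcal{Q}$ realizes $g^\star$. Since $q_t$ weights every point with nonnegative density, the expected objective is minimized by any $q$ whose score equals $g^\star$ $q_t$-almost everywhere. So by assumption i) we get
$$g_{t+1} = \beta_t g_t + (1-\beta_t)s$$
on the support of $q_t$. To keep this valid at all $\theta$, I would assume (or note) that $q_0$ has full support and that the iterates keep it, which holds if the realized score fields are smooth and the densities are of exponential form.

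Next I subtract $s$ and unroll. The error field $e_t := g_t - s$ satisfies $e_{t+1} = \beta_t e_t$, hence
$$e_T(\theta) = \Big(\prod_{t=0}^{T-1}\beta_t\Big) e_0(\theta).$$
With $\alpha_t = t/T$ we have $\beta_0 = 0$, since $\alpha_0 = 0$. So the very first step already gives $e_1 \equiv 0$, and every later step preserves it. If one prefers not to lean on that degenerate first step, there is a robust bound that does not use it: since $\beta_t \le 1/2$ for all $t$, we get $\|e_T(\theta)\| \le 2^{-T}\|e_0(\theta)\| \to 0$ pointwise as $T\to\infty$. Either way, $\nabla\log q_T \to \nabla \log \pi$ pointwise, and locally uniformly wherever $e_0$ is locally bounded.

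The final step, which I expect to be the main obstacle, is to pass from convergence of scores to convergence of densities. Integrating along a path from a fixed base point $\theta_0$ gives
$$\log q_T(\theta) - \log\pi(\theta) = c_T + \int_{\theta_0}^{\theta} e_T \cdot d\ell,$$
where $c_T$ is a constant. The path integral tends to $0$ locally uniformly. Since both $q_T$ and $\pi$ integrate to one, the constant is pinned down: $e^{-c_T} = \int \pi\, e^{\int e_T}$. Showing $c_T \to 0$ needs an interchange of limit and integral. I would do this by dominated convergence, using the bound $|e_T| \le 2^{-T}|e_0|$ together with an integrability assumption such as $\int \pi\, e^{\int |e_0|} < \infty$, e.g.\ $e_0$ of at most linear growth with $\pi$ having Gaussian tails. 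Alternatively, in the exact case I would simply note that $e_T \equiv 0$ for $T \ge 1$, so $q_T = \pi$ exactly, which also covers the "$\simeq$" approximation in (\ref{eq:assumption2}). This gives $q_T \to \pi$ pointwise and, by Scheff\'e's lemma, in total variation.
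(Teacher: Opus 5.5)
Your proof is correct and has the same skeleton as the paper's. The paper also treats $\nabla_\theta\log q$ as a free functional variable and sets the functional gradient of (\ref{eq:prox_sm_functional}) to zero to get the recurrence (\ref{eq:score_recurrence}). It then telescopes to $\nabla_\theta\log q_T = s + \gamma_T(\nabla_\theta\log q_0 - s)$ with $\gamma_T=\prod_{t<T}\alpha_t/(1+\alpha_t)$.

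The differences are in the steps around that skeleton:
\begin{itemize}
\item You notice that $\alpha_0=0$ makes $\gamma_T$ identically zero for $T\ge 1$. Under assumption ii), the first step (pure score matching) therefore already lands on $\pi$. The paper only asserts $\gamma_T\to 0$ from $\alpha_t\le 1$, which is implicitly your $2^{-T}$ bound.
\item To pass from scores to densities, the paper invokes Lemma~\ref{lemma:sm=dm}. Its proof observes that $p$ and $q$ are both stationary for the same Langevin SDE. That argument tacitly needs uniqueness of the stationary law, and as stated it concerns exact equality of scores, not a limit.
\item You instead integrate the gradient field $e_T$ and fix the additive constant by normalization. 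This is elementary and needs only a connected (full) support. With your explicit integrability hypothesis and dominated convergence, it genuinely handles a limit and gives total-variation convergence via Scheff\'e.
\item You also flag that the first-order condition determines the score only $q_t$-a.e., which the paper passes over.
\end{itemize}
The paper's route is shorter because it outsources the last step to a general lemma. Yours is self-contained, makes the hidden regularity assumptions visible, and shows that under assumption ii) the theorem is essentially immediate.

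One small overstatement: the exact case does not ``cover'' the $\simeq$ in (\ref{eq:assumption2}). If realizability holds only up to an error $\epsilon_t$, then $e_{t+1}=\beta_t e_t+\epsilon_t$, and the contraction $\beta_t\le 1/2$ only yields $\limsup_T\|e_T\|\le 2\sup_t\|\epsilon_t\|$, not convergence. The paper also tacitly treats $\simeq$ as equality, so this is not a gap relative to its proof.
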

\begin{proof}
The second assumption may sound very strong, and it may not be satisfied by conventional parametric density families $\mathcal{Q}$ in practice. However, if the true posterior is Gaussian, and if $\mathcal{Q}$ is also a Gaussian family, then this assumption holds. To see this, let $\pi(\theta) = \mathcal{N}(\theta; \mu, \Sigma)$ and $q(\theta) = \mathcal{N}(\theta; m, V)$. Then (\ref{eq:assumption2}) holds for $q'(\theta) = \mathcal{N}(\theta; m', V')$ where
\begin{align}
m' = \big(\beta V^{-1} \!+\! (1\!-\!\beta)\Sigma^{-1}\big)^{-1} \big(\beta V^{-1} m \!+\! (1\!-\!\beta) \Sigma^{-1} \mu \big), \ \ \ \ 
V' = \big(\beta V^{-1} \!+\! (1\!-\!\beta)\Sigma^{-1}\big)^{-1}
\end{align}
Now we 
prove the theorem under the assumptions.
We first regard $\nabla_\theta \log q(\theta)$ as a functional variable denoted by $v(\theta)$. Then solving (\ref{eq:prox_sm}) corresponds to:
\begin{align}
\min_{v(\cdot)} \ \int q_t(\theta) \Big(
\alpha_t \big|\big| v(\theta) - \nabla_\theta \log q_t(\theta) \big|\big|_2^2 + 
\big|\big| v(\theta) - s(\theta) \big|\big|_2^2
\Big) d\theta
\label{eq:prox_sm_functional}
\end{align}
By taking the functional gradient with respect to $v(\cdot)$ and setting it to zero, where the latter has a unique solution due to the second assumption, 
the following equality holds for the optimal $q_{t+1}$:
\begin{align}
\nabla_\theta \log q_{t+1}(\theta) \simeq \frac{\alpha_t}{1+\alpha_t} \nabla_\theta \log q_t(\theta) + \frac{1}{1+\alpha_t} s(\theta)
\label{eq:score_recurrence}
\end{align}
Clearly $q_{t+1}\in\mathcal{Q}$ due to (\ref{eq:assumption2}). 
By telescoping the recurrence (\ref{eq:score_recurrence}), we can find the closed-form solution for the recurrence as:
\begin{align}
\nabla_\theta \log q_T(\theta) \simeq s(\theta) + \gamma_T \big( \nabla_\theta \log q_0(\theta) - s(\theta) \big), \ \ \ \ \gamma_T := \prod_{t=0}^{T-1} \frac{\alpha_t}{1+\alpha_t}
\end{align}
Since $\alpha_t \leq \alpha_\infty \leq 1$ for all $t$, we get $\gamma_T \to 0$ as $T\to\infty$. This proves that the score of $q_T(\theta)$ converges to the target score $s(\theta)$. Then we use the fact that score matching implies distribution matching (Lemma~\ref{lemma:sm=dm} in Appendix), which completes the proof. 
\end{proof}

\textbf{Convergence in noisy score cases.} 
If we replace the true score $s(\theta)$ in (\ref{eq:prox_sm}) by a noisy unbiased sample $\hat{s}(\theta)$, we have a noisy version of the loss that can be written as a function of $\lambda$ 
as follows\footnote{
The ultimate loss function or what the algorithm converges to is rather implicit since it is a proximal method. In this sense (\ref{eq:noisy_loss}) is not an ultimate loss function in the traditional sense.
}:
\begin{align}
loss(\lambda) := \mathbb{E}_{\theta\sim q_t(\theta)}\Big[
\alpha_t \big|\big| \nabla_\theta \log q_\lambda(\theta) - \nabla_\theta \log q_t(\theta) \big|\big|_2^2 
\ + \ \big|\big| \nabla_\theta \log q_\lambda(\theta) - \hat{s}(\theta) \big|\big|_2^2
\Big]
\label{eq:noisy_loss}
\end{align}
For each $\theta\sim q_t(\theta)$ which is constant in $\lambda$, the gradient of the loss with respect to $\lambda$ becomes: 
\begin{align}
\frac{d loss(\lambda)}{d \lambda} =   
(1\!+\!\alpha_t) \frac{\partial ||\nabla \log q_\lambda(\theta)||^2}{\partial\lambda} \!-\! \label{eq:noisy_loss_grad} 
2\alpha_t \frac{\partial \nabla \log q_\lambda(\theta)}{\partial \lambda} \nabla \log q_t(\theta) \!-\! 2 \frac{\partial \nabla \log q_\lambda(\theta)}{\partial \lambda} \hat{s}(\theta)
%
\end{align}
In this derivation we emphasize that the loss gradient in (\ref{eq:noisy_loss_grad}) is an {\em unbiased} estimate, i.e., the expectation of (\ref{eq:noisy_loss_grad}) over the randomness of $\hat{s}(\theta)$ becomes identical to the true loss gradient that can be attained with the true score $s(\theta)$, thanks to the linear appearance of the noisy $\hat{s}(\theta)$ term related to the variable $\lambda$ ($\theta$ is constant here). 
We re-emphasize that no reparametrization sampling trick is required, and the gradient with respect to $\lambda$ operates only on the density $q$'s internal part, not through $\theta$. 
Compared to ELBO-based ADVI, we have some efficiency gain in practice 
as demonstrated in Sec.~\ref{sec:complexity} (Table~\ref{tab:vision_time_memory}). 
Our overall algorithm is summarized as pseudocode in Alg.~\ref{alg:sgsmvi}.

\section{Related Work}\label{sec:related}



The literature on 
variational inference is massive, and the latest research ideas and directions can be found in some of the recent papers such as~\citep{bbvi2015,bbvi_ryder18,bbvi_locatello18,bbvi2019,bbvi_weladawe22,bbvi_burroni23,bbvi_domke23,bbvi_kim23,bbvi_giordano24}. We have already discussed score-based VI algorithms~\citep{fisher,score,gsmvi,bam} that are closely related to ours in Sec.~\ref{sec:setup}. Other score-based VI methods include among others the Fisher divergence optimization where in~\citep{sivi} they considered semi-implicit variational families to optimize the Fisher divergence, and in~\citep{evi} the energy-based variational families were dealt with. 

Our method is a proximal method that iteratively refines the estimate while gradually improving the score matching quality. Similar proximal ideas were previously adopted for variational inference, but in different flavors. For instance, one reasonable objective is an ELBO combined with a (forward) KL regularization term as studied in~\citep{proximal_point, khan15,khan16}. 
In~\citep{lambert22}, a similar proximal algorithm was proposed for the KL divergence objective with the Wasserstein metric as a regularizer.
In~\citep{diao23} they introduced a proximal-gradient method with a closed-form proximal step. 
Although we have discussed some reasonable theoretical arguments on the convergence of the proposed algorithm, the assumptions we made may not be satisfied in practice. 
More rigorous convergence analysis may require stochastic inexact proximal point methods~\citep{JMLR:v24:21-1219,ispp25} and related techniques. We only verified empirical convergence with supporting arguments in this paper, and more rigorous study will be done as future research.


\section{Experiments}\label{sec:experiments}


We test the performance of our proposed method in various variational inference settings including large-scale visual recognition with ViT BNNs and time-series forecasting with deep BNNs. We compare our method with the following competing VI methods: the ELBO-based VI ({\em ADVI})~\citep{advi}, two recent {\em GSM}~\citep{gsmvi}, and {\em BaM}~\citep{bam}. We exclude {\em Score}~\citep{score} and {\em Fisher}~\citep{fisher} since they require differentiating twice over $\theta$ (through neural networks), computationally demanding, especially large memory footprint for the large-scale Bayesian deep learning (e.g., ViT) experiments in Sec.~\ref{sec:vit}. But we provide comparisons with these models on synthetic data in Appendix. 
Our empirical study is performed on: 
synthetic Gaussian, Gaussian mixtures cases as proof of concept in Sec.~\ref{sec:toy}, the PosteriorDB Bayesian posterior inference benchmark~\citep{posteriordb} 
in Appendix~\ref{appsec:more_expmt}, and real-world large-scale visual recognition and time-series forecasting benchmarks in Sec.~\ref{sec:mnist_low_data},~\ref{sec:vit}, and \ref{sec:ts}. 

For ADVI and our method, we use the Adam optimizer~\citep{adam} unless stated otherwise, where the learning rates are chosen between $10^{-1}$ and $10^{-3}$ by some pilot runs with small number iterations. For all methods we randomly initialized the variational density $q$ unless stated otherwise, and use the same initial $q$ for all competing methods. Further training details and settings can be found in Appendix~\ref{appsec:expmt_details}. 
For the evaluation metric, we report empirical divergences between the variational and the target
distributions with respect to the number of score calls. We use the forward KL (FKL)\footnote{
In general, the forward KL, if possible to compute, is considered to be a better metric to measure the similarity between the true and approximate distributions than the backward KL. The backward KL is hard to detect mode collapsing, for instance, the mode collapsing cases of $q \! \ll \! \pi$ (i.e., $q$ is absolutely continuous with respect to $\pi$). This may be one drawback of the ELBO-based VI that aims to minimize the backward KL loss. 
}, i.e., $\textrm{KL}(\pi||q)$ as a metric for synthetic experiments (Sec.~\ref{sec:toy}) while the negative ELBO used for real-world experiments (Sec.~\ref{sec:mnist_low_data},~\ref{sec:vit}, and \ref{sec:ts}). 
Additional results can be found in Appendix~\ref{appsec:additional_expmts}.

\subsection{Synthetic Experiments}\label{sec:toy}

Before testing our method on BNN variational inference tasks, we 
study the convergence of the proposed approach on several synthetic data settings: Gaussian and non-Gaussian targets $\pi(\theta)$ where the state $\theta$ is not neural net parameters, but simply a random vector. 
We analyze whether or not the algorithms converge and how fast they converge with respect to the number of score function calls. 

\subsubsection{Gaussian Target Cases}\label{sec:expmt_gaussian}


\begin{wrapfigure}[18]{r}{0.51\textwidth}  
\centering
\vspace{-4.7em}
\begin{center}
\centering
\includegraphics[trim = 0mm 5mm 9mm 4mm, clip, scale=0.205 
]{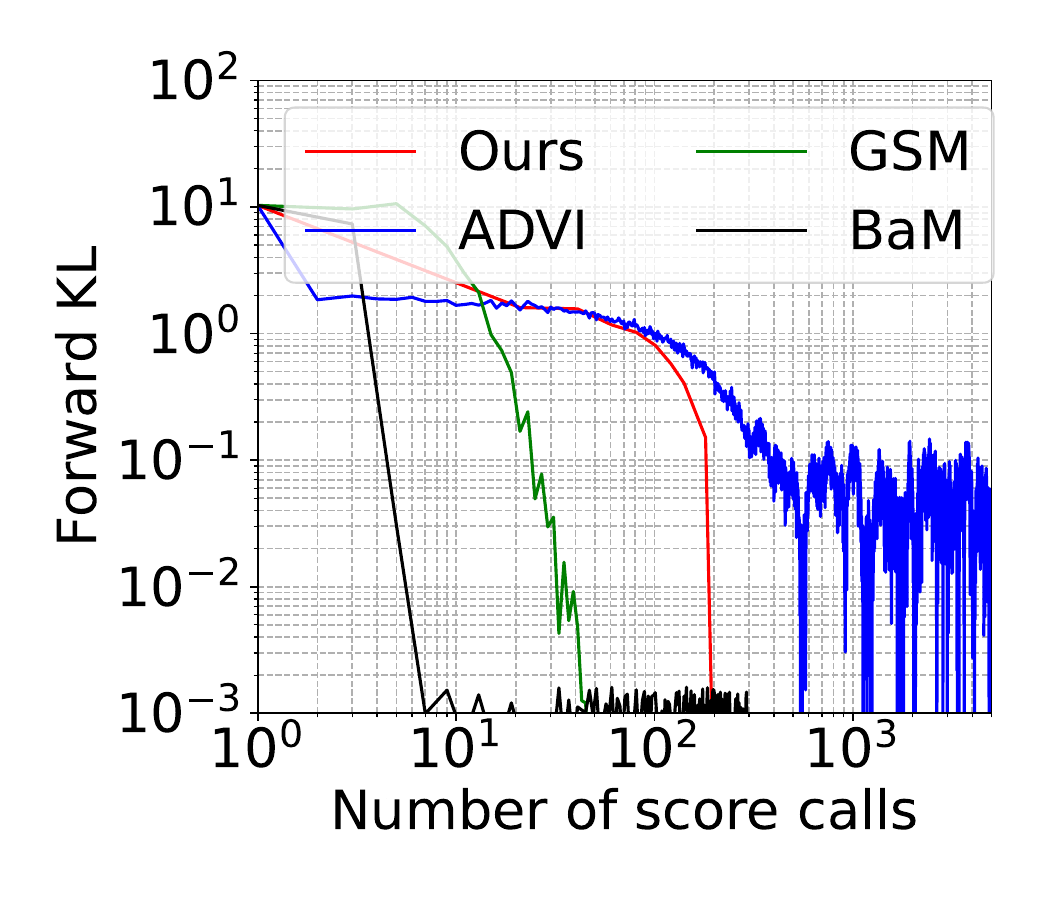}
%
\includegraphics[trim = 8.5mm 5mm 7mm 4mm, clip, scale=0.205 
]{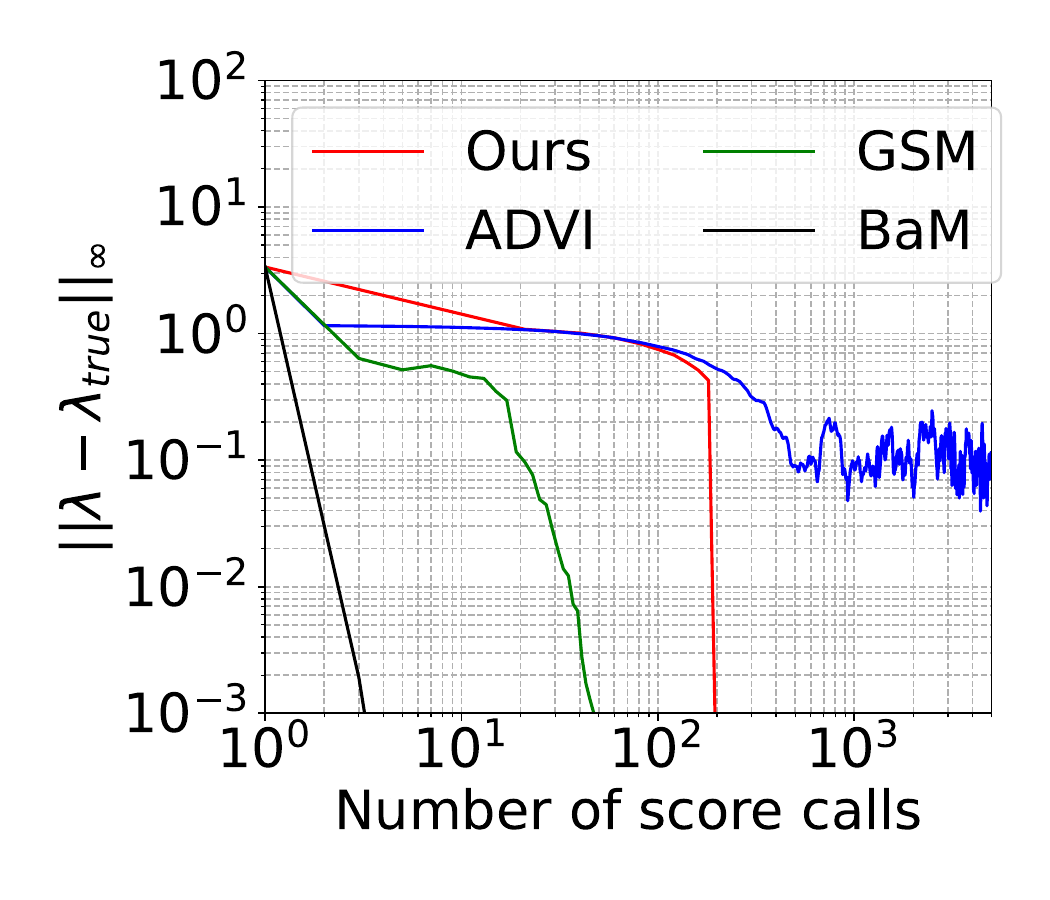}
%
\\
\includegraphics[trim = 5mm 5mm 9mm 4mm, clip, scale=0.205 
]{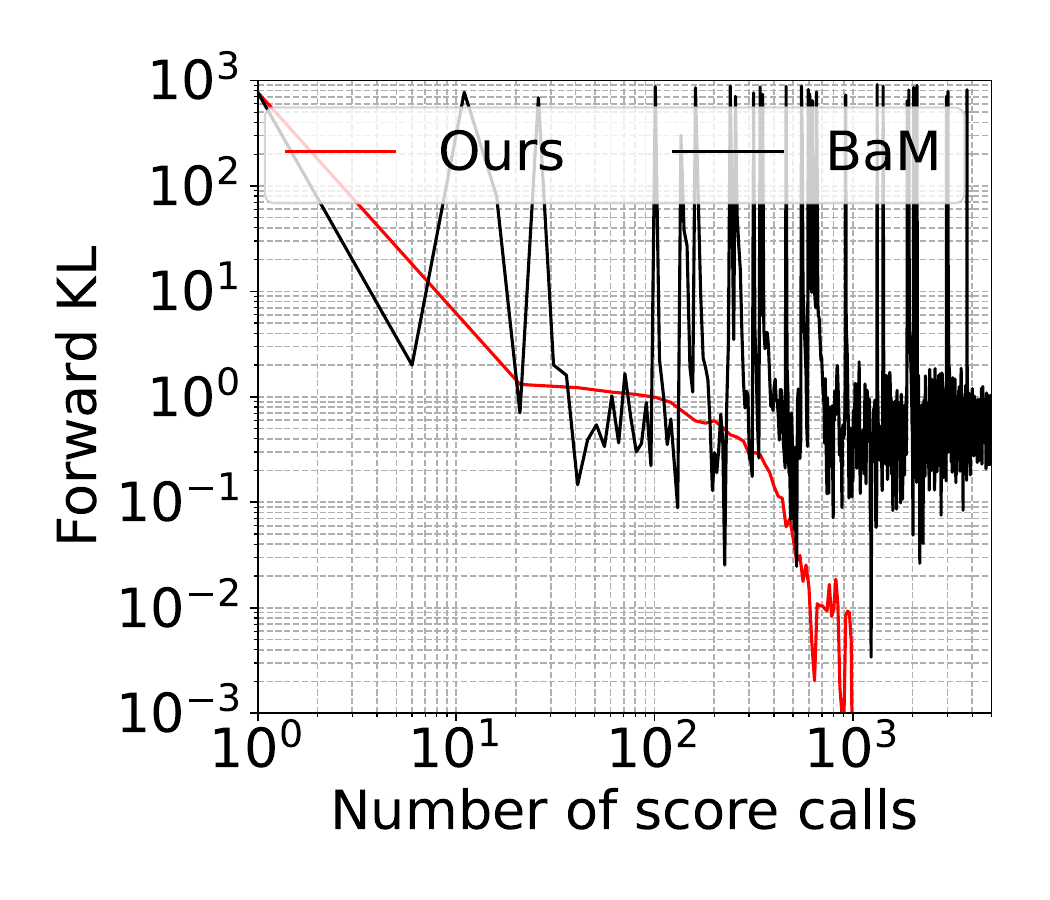}
%
\includegraphics[trim = 8.5mm 5mm 9mm 4mm, clip, scale=0.205 
]{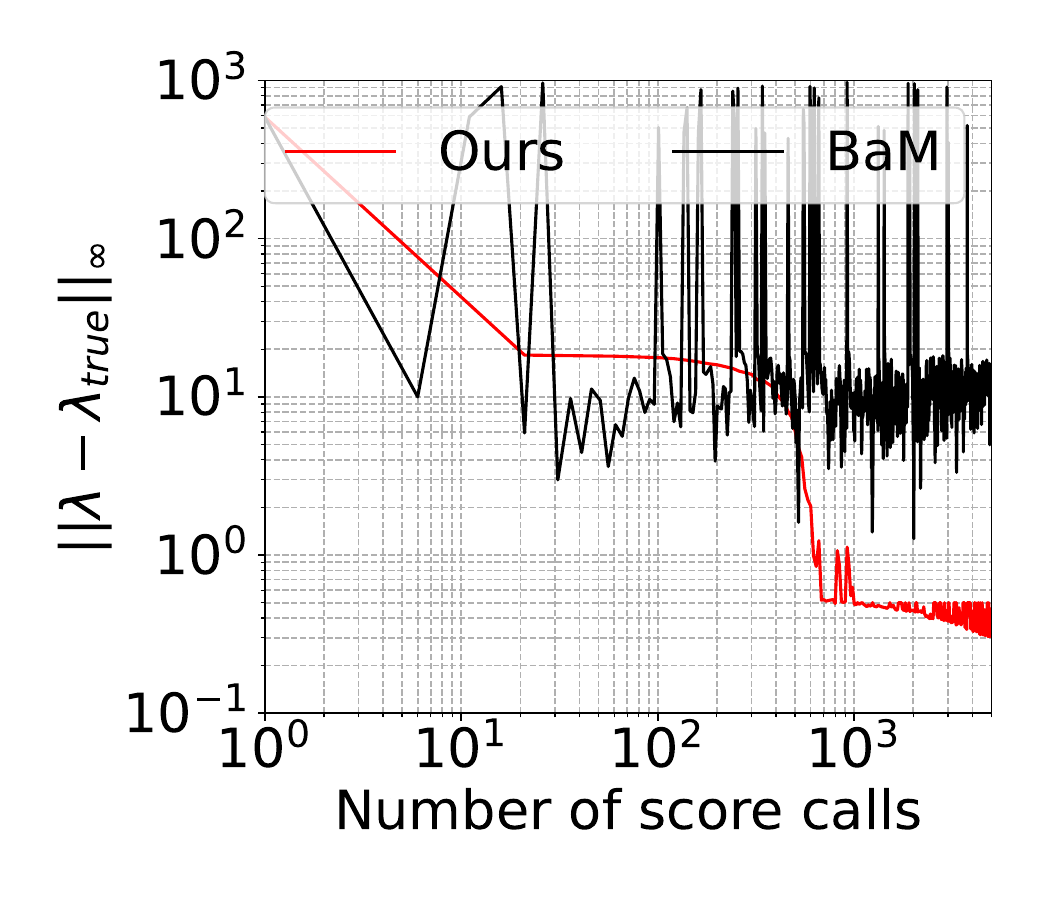}
%
%
\end{center}
\vspace{-1.5em}
\caption{
(Gaussian target cases) 
(Top row) Convergence as the number of score calls increases for forward KL (Left) and the error in Gaussian parameters $\lambda = (m,V)$ (Right). 
(Bottom row) Convergence for initial $q$ with small $\textrm{Cov}(q)$ eigenvalues. 
}
\label{fig:p_gaussian}
\end{wrapfigure}
We first consider the Gaussian $\pi(\theta)$ cases where we randomly generate parameters $\lambda_{true}=(m_{true},V_{true})$ as $\pi(\theta)$ with $\dim(\theta)=3$. The competing variational inference algorithms are compared in Fig.~\ref{fig:p_gaussian} (Top). 
We see that BaM and GSM especially work well with fastest convergence rates as they have closed-form updates under Gaussian assumption. 
Our approach converges in reasonable rates, 
however, ADVI suffers from slow convergence with much fluctuation. 
To see sensitivity to the initial $q$ choice regarding the $\textrm{Cov}(q)$-norm based methods, especially BaM, we start with $q$ that has very small covariance eigenvalues. As shown in 
Fig.~\ref{fig:p_gaussian} (Bottom), 
BaM struggles to converge since its ignorance of score mismatches along the small covariance eigenvector directions. 
In the diagonal covariance cases, for instance, those dimensions corresponding to small variances would not be optimized properly to match the scores of $q$ and $\pi$. Our approach is more robust to initial $q$ choice.


\subsubsection{
Non-Gaussian Target Cases 
}\label{sec:expmt_mog}
\vspace{-0.5em}
\begin{wrapfigure}[21]{r}{0.51\textwidth}  
\vspace{-4.5em}
\begin{center}
\centering
\includegraphics[trim = 3mm 18mm 280mm 1mm, clip, scale=0.180 
]{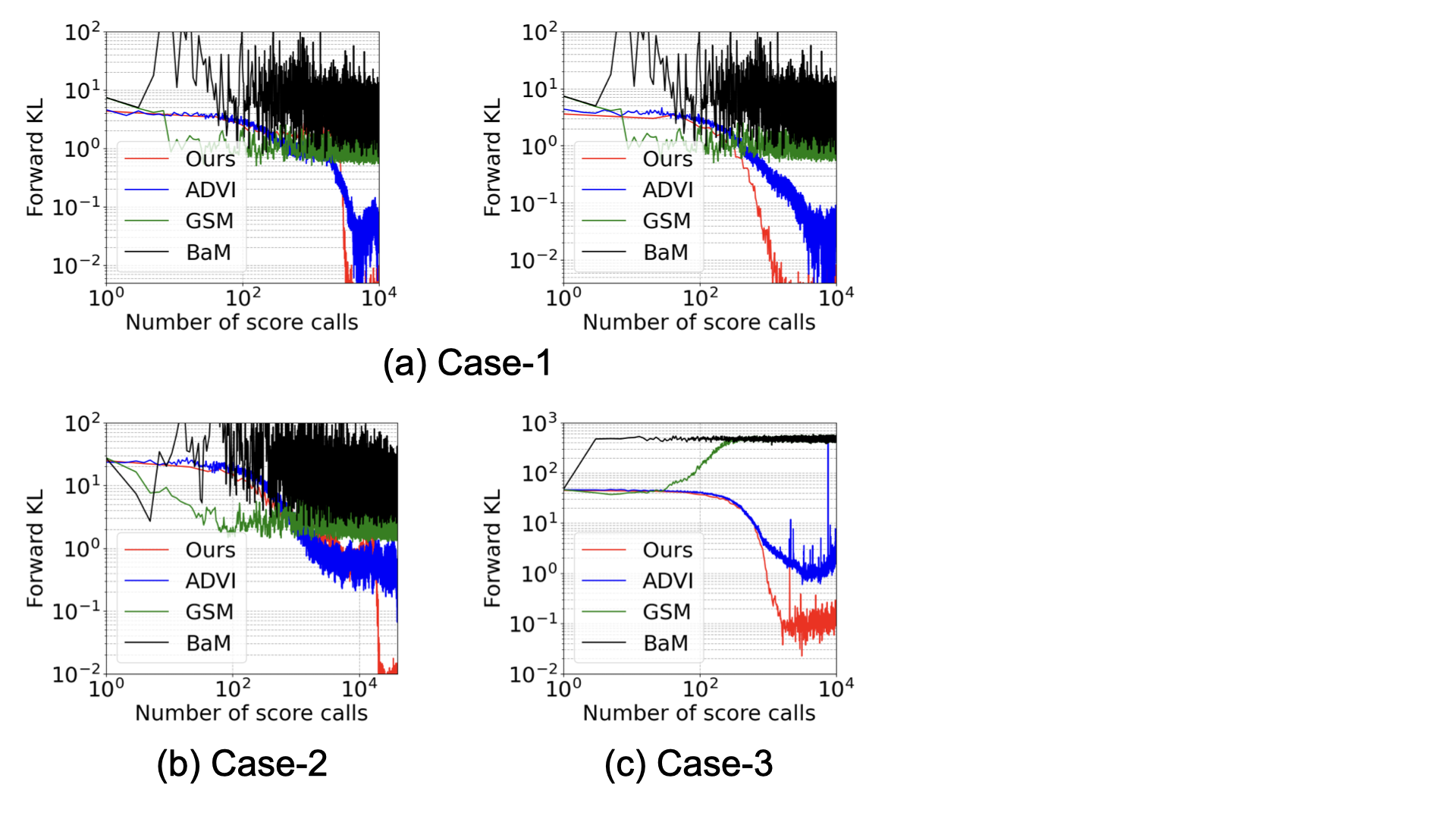}
\end{center}
\vspace{-1.0em}
\caption{
(Gaussian mixtures) 
(a) Case-1: Convergence in the order-2 Gaussian mixture target case. 
Ours and ADVI use an order-2 mixture of Gaussians $q$ (Left) and an order-3 mixture of Gaussians $q$ (i.e., model mismatch) (Right).
(b) Case-2: When the target $\pi$ has order 5. 
(c) Case-3: When $\dim(\theta)\!=\!30$. 
}
\label{fig:p_mog}
\end{wrapfigure}
Next we consider the multi-modal target cases where $\pi$ is mixtures of Gaussians. 
We study three experimental scenarios: i) identifiability of the target when the variational density $q_\lambda$ is a Gaussian mixture, ii) impact of the increased number of modes in the target $\pi$, and iii) impact of the high-dimensional ($\theta$) state space.

\textbf{Case-1 (Identifiability of Gaussian mixture target and robustness to model mismatch):} With $\dim(\theta)\!=\!3$ and order-2 mixture $\pi$, we first check convergence when the variational density family is the same order-2 mixture (i.e., model match). As shown in Fig.~\ref{fig:p_mog}(a) (Left), our method 
converges with the smallest forward KL score and the convergence rate faster than ADVI. GSM and BaM are not successful due to their Gaussian $q$ restriction. 
Next, when we choose $q$ to have an order-3 mixture (i.e., model mismatch\footnote{
This model mismatch setting ensures that the variational family is large enough to subsume the target distribution (e.g., by having 0 mixing proportion for the third component). Sec.~\ref{app_sec:model_mismatch2} in Appendix discusses the model mismatch scenario where the variational family doesn't include the true posterior.
}), as shown in Fig.~\ref{fig:p_mog}(a) (Right), our method still converges to the similar FKL value which is about 10 times lower than that of ADVI. When we inspect the learned $q$, we 
identify one component that is negligible with mixing proportion close to 0, demonstrating 
our approach is robust to model mismatch.
%

\textbf{Case-2 (Increased number of modes in the target):} 
When we increase the mixture order of $\pi$ to 5, thus more distinct from a single Gaussian, as shown in Fig.~\ref{fig:p_mog}(b), our method still works well, with the final FKL value even lower than ADVI. The inferior performance of ADVI 
can be attributed to the mode collapsing due to its reverse KL objective, and possibly to the Gumbel-softmax approximation error for reparametrized sampling with Gaussian mixtures.
%

\textbf{Case-3 (Increased state dimensionality):} 
When we increase the state dimensionality from 3 to 30, as shown in Fig.~\ref{fig:p_mog}(c), our method still works gracefully. ADVI yields a suboptimal solution while GSM and BaM fail to work with numerical issues.

\subsection{
MNIST Experiment
}\label{sec:mnist_low_data}


\begin{wrapfigure}[22]{r}{0.29\textwidth}  
\vspace{-5.5em}
\begin{center}
\centering
\includegraphics[trim = 5mm 5mm 6mm 4mm, clip, scale=0.245 
]{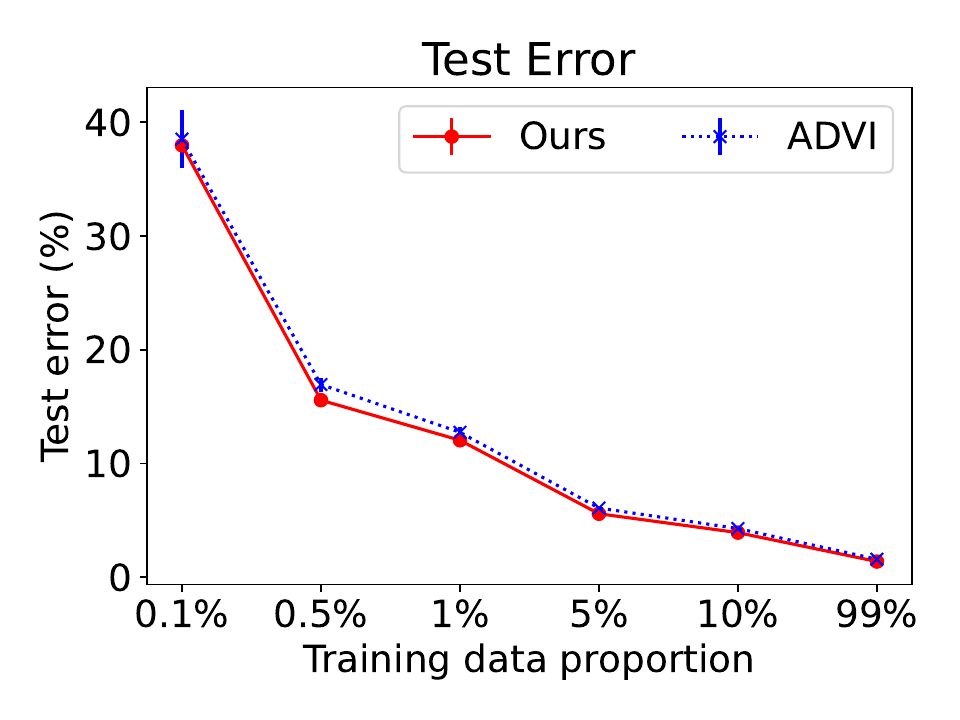} \\ 
\includegraphics[trim = 5mm 5mm 6mm 4mm, clip, scale=0.245 
]{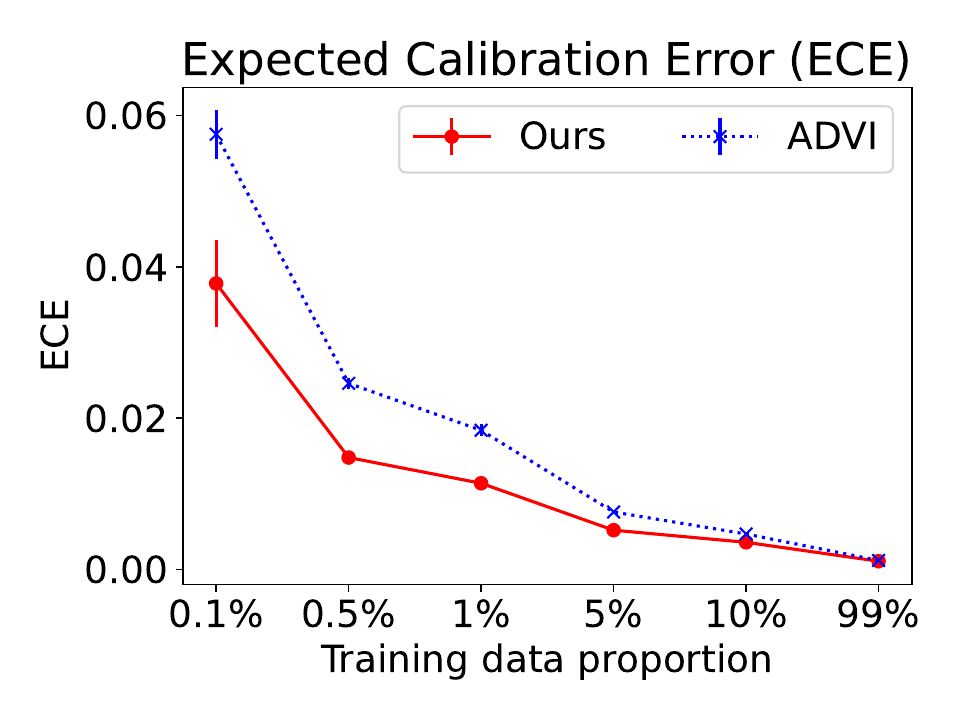} \\ 
\includegraphics[trim = 5mm 5mm 6mm 4mm, clip, scale=0.245 
]{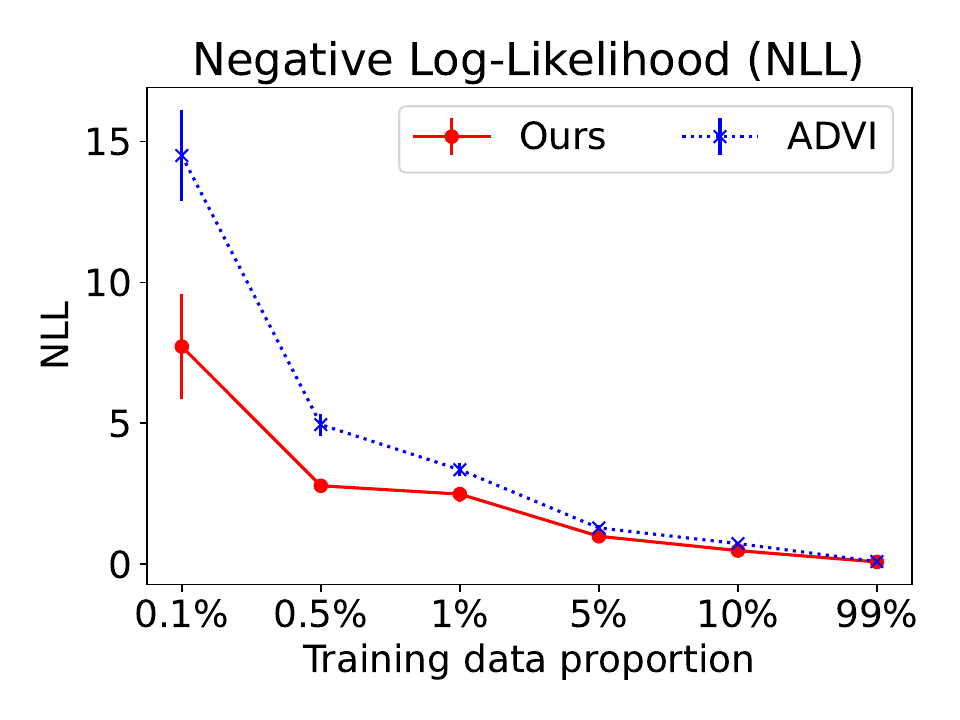}
\end{center}
\vspace{-1.4em}
\caption{
(MNIST) Impact of training data size on test prediction and uncertainty quantification.
}
\label{fig:mnist}
\end{wrapfigure}
We move on to the real-world Bayesian deep learning experiment with the MNIST dataset and a multi-layer perceptron (MLP) network. As the original MNIST training data split of 60K samples leads to the performance saturation\footnote{Our algorithm achieved $1\!\sim\!2\%$ test error rate as shown by the right-most (red) point in the top panel of Fig.~\ref{fig:mnist}.}, we focus on low training data regimes by reducing training data size substantially. We aim to see model's prediction performance and uncertainty quantification capability. To this end, we randomly split the original training data into $p\%$ training and $(100\!-\!p)\%$ validation sets with $p\in\{0.1, 0.5, 1, 5, 10, 99\}$. For instance, $p\!=\!0.1$ corresponds to the extreme low data regime of training data size 60 samples (6 images per class). We use the validation data to determine early stopping of training iterations. 

The variational family is Gaussian where $\lambda$ consists of mean and diagonal covariance for the MLP parameters $\theta$. 
We first observe that at this model scale (three hidden layers with 1000 units) GSM and BaM simply fail to work due to numerical issues. 
We focus on comparing our method and ADVI where both start with the same randomly initialized $\lambda$. 
In Fig.~\ref{fig:mnist} we plot test errors and 
the two uncertainty quantification metrics, the {\em expected calibration error (ECE)}~\citep{ece} and  
the test {\em negative log-likelihood (NLL)}. 
The former is computed by $\textrm{ECE} =  \frac{1}{M} \sum_{m=0}^{M-1} \frac{binsize[m]}{N\cdot K} \cdot \big| acc[m] - conf[m] \big|$ where we group the predictions $\{p(y\!=\!j|x)\}_{j=1}^K$ into $M$ equal-size bins, and count the cases of accurate prediction and the confidence values $p(y\!=\!j|x)$ in $acc[m]$ and $conf[m]$, respectively.
The latter is defined as: $\mathbb{E}_{(x^*,y^*)\sim D^*} [-\log p(y^*|x^*,D)]$ for the class predictive model $p(y^*|x^*, D)$ on test data $D^*$ where $p(y^*|x^*,D) = \int p(y^*|x^*,\theta) p(\theta|D) d\theta$ is the 
predictive distribution from the posterior estimate $p(\theta|D)$ on training data $D$.
As shown, while the test performance of both models are equally good, our approach achieves better uncertainty quantification than ADVI, which can be attributed to 
the proximal score matching objective.

\subsection{Large-scale BNNs for 
Visual Recognition
}\label{sec:vit}

\begin{table*}[t!]
\setlength{\tabcolsep}{2.85pt}
\vspace{-2.0em}
\caption{
Large-scale Bayesian deep learning experimental results.
}
\centering
\begin{scriptsize}
\centering
\begin{tabular}{cccccccc}
\toprule
& & 
\multicolumn{2}{c}{Error ($\%$) $\downarrow$} &
\multicolumn{2}{c}{ECE $\downarrow$} &
\multicolumn{2}{c}{NLL $\downarrow$} \\
\cmidrule(lr){3-4} \cmidrule(lr){5-6} \cmidrule(lr){7-8} 
\multicolumn{2}{c}{\multirow{2}{*}{}} & 
ResNet101 & ViT-L-32 &
ResNet101 & ViT-L-32 &
ResNet101 & ViT-L-32 \\
\midrule
\multirow{6}{*}{Pets} & Vanilla\Tstrut & $9.6400^{\pm 0.2688}$ & $8.1400^{\pm 0.2909}$ & N/A & N/A & N/A & N/A \\
& ADVI\Tstrut & $9.1340^{\pm 0.2648}$ & $7.9100^{\pm 0.2327}$ & $0.0800^{\pm 0.0179}$ & $0.0600^{\pm 0.0063}$ & $0.3100^{\pm 0.0057}$ & $0.2518^{\pm 0.0047}$ \\
& MC-Dropout\Tstrut & ${9.4827^{\pm 0.1235}}$ & ${8.3155^{\pm 0.1625}}$ & ${0.0960^{\pm 0.0115}}$ & ${0.0539^{\pm 0.0068}}$ & ${0.3163^{\pm 0.0043}}$ & ${0.2569^{\pm 0.0012}}$ \\
& SGLD\Tstrut & $9.3767^{\pm 0.1239}$ & $8.6467^{\pm 0.0834}$ & $0.0473^{\pm 0.0040}$ & $0.0425^{\pm 0.0083}$ & $0.3052^{\pm 0.0070}$ & $0.2649^{\pm 0.0037}$ \\
& Lap.~Approx.\Tstrut & $10.0833^{\pm 0.1173}$ & $8.5767^{\pm 0.2243}$ & $0.0567^{\pm 0.0741}$ & $0.0475^{\pm 0.0083}$ & $0.3268^{\pm 0.0098}$ & $0.2684^{\pm 0.0011}$ \\
& Ours\Tstrut & $\pmb{9.1140^{\pm 0.1822}}$ & $\pmb{7.7460^{\pm 0.2403}}$ & $\pmb{0.0440^{\pm 0.0049}}$ & $\pmb{0.0360^{\pm 0.0080}}$ & $\pmb{0.2986^{\pm 0.0050}}$ & $\pmb{0.2468^{\pm 0.0037}}$ \\
\midrule
\multirow{6}{*}{Flowers} & Vanilla\Tstrut & $15.0167^{\pm 0.2623}$ & $12.5100^{\pm 0.5535}$ & N/A & N/A & N/A & N/A \\
& ADVI\Tstrut & $12.6780^{\pm 0.8668}$ & $12.6560^{\pm 0.5455}$ & $0.0700^{\pm 0.0089}$ & $0.1580^{\pm 0.0075}$ & $0.5104^{\pm 0.0370}$ & $0.5116^{\pm 0.0213}$ \\

& MC-Dropout\Tstrut & $14.0333^{\pm 0.4620}$ & $13.2133^{\pm 0.2136}$ & $0.0594^{\pm 0.0039}$ & $0.0140^{\pm 0.0029}$ & $0.5500^{\pm 0.0283}$ & $0.5567^{\pm 0.0249}$ \\
& SGLD\Tstrut & $13.4867^{\pm 0.1855}$ & $12.8667^{\pm 0.1429}$ & $0.0647^{\pm 0.0101}$ & $0.1582^{\pm 0.0081}$ & $0.5195^{\pm 0.0093}$ & $0.5591^{\pm 0.0103}$ \\
& Lap.~Approx.\Tstrut & $13.7725^{\pm 0.5796}$ & $13.5212^{\pm 0.3906}$ & $0.0883^{\pm 0.0052}$ & $0.0367^{\pm 0.0111}$ & $0.5859^{\pm 0.0452}$ & $0.6477^{\pm 0.0029}$ \\
& Ours\Tstrut & $\pmb{12.0400^{\pm 1.1195}}$ & $\pmb{11.9700^{\pm 0.6237}}$ & $\pmb{0.0420^{\pm 0.0075}}$ & $\pmb{0.0080^{\pm 0.0098}}$ & $\pmb{0.4683^{\pm 0.0397}}$ & $\pmb{0.4930^{\pm 0.0184}}$ \\
\midrule
\multirow{6}{*}{Aircraft} & Vanilla\Tstrut & $48.0100^{\pm 0.5575}$ & $54.3000^{\pm 0.4864}$ & N/A & N/A & N/A & N/A \\
& ADVI\Tstrut & $45.4200^{\pm 0.3993}$ & $53.0020^{\pm 1.3731}$ & $0.0440^{\pm 0.0049}$ & $0.0620^{\pm 0.0040}$ & $1.8084^{\pm 0.0216}$ & $2.0754^{\pm 0.0649}$ \\

& MC-Dropout\Tstrut & $45.9700^{\pm 1.2865}$ & $53.2133^{\pm 0.5000}$ & $0.0237^{\pm 0.0017}$ & $0.0757^{\pm 0.0046}$ & $1.7613^{\pm 0.0092}$ & $\pmb{2.0400^{\pm 0.0096}}$ \\
& SGLD\Tstrut & $47.5140^{\pm 0.8574}$ & $52.9125^{\pm 0.2943}$ & $0.0217^{\pm 0.0026}$ & $0.0603^{\pm 0.0074}$ & $1.7883^{\pm 0.0505}$ & $2.0883^{\pm 0.0102}$ \\
& Lap.~Approx.\Tstrut & $46.0425^{\pm 0.6013}$ & $53.5600^{\pm 0.1122}$ & $0.0427^{\pm 0.0020}$ & $0.0710^{\pm 0.0078}$ & $1.8162^{\pm 0.0062}$ & $2.0955^{\pm 0.0123}$ \\
& Ours\Tstrut & $\pmb{44.1960^{\pm 0.8833}}$ & $\pmb{52.3180^{\pm 0.4346}}$ & $\pmb{0.0180^{\pm 0.0098}}$ & $\pmb{0.0520^{\pm 0.0098}}$ & $\pmb{1.7280^{\pm 0.0245}}$ & ${2.0531^{\pm 0.0180}}$ \\
\midrule
\multirow{6}{*}{DTD} & Vanilla\Tstrut & $38.2100^{\pm 0.6766}$ & $34.0767^{\pm 0.2370}$ & N/A & N/A & N/A & N/A \\
& ADVI\Tstrut & $35.0440^{\pm 0.7463}$ & $32.8300^{\pm 0.6549}$ & $0.3520^{\pm 0.0640}$ & $0.1900^{\pm 0.0228}$ & $1.6586^{\pm 0.1853}$ & $1.2194^{\pm 0.0086}$ \\

& MC-Dropout\Tstrut & $35.8867^{\pm 0.5327}$ & $33.8267^{\pm 0.5331}$ & $0.3867^{\pm 0.0330}$ & $0.1675^{\pm 0.0148}$ & $\pmb{1.3642^{\pm 0.0166}}$ & $1.2337^{\pm 0.0200}$ \\
& SGLD\Tstrut & $36.0600^{\pm 0.2205}$ & $33.3700^{\pm 0.5588}$ & $0.3067^{\pm 0.0464}$ & $0.1875^{\pm 0.0426}$ & $1.5099^{\pm 0.0472}$ & $1.2498^{\pm 0.0203}$ \\
& Lap.~Approx.\Tstrut & $36.2933^{\pm 0.4666}$ & $33.8567^{\pm 0.1406}$ & $0.7500^{\pm 0.0589}$ & $0.2625^{\pm 0.0238}$ & $1.8828^{\pm 0.0499}$ & $1.3343^{\pm 0.0232}$ \\
& Ours\Tstrut & $\pmb{34.4900^{\pm 0.5438}}$ & $\pmb{32.4980^{\pm 0.4829}}$ & $\pmb{0.1120^{\pm 0.0075}}$ & $\pmb{0.0900^{\pm 0.0210}}$ & ${1.4001^{\pm 0.0194}}$ & $\pmb{1.1936^{\pm 0.0092}}$ \\
\bottomrule
\vspace{-0.5em}
\end{tabular}
\end{scriptsize}
\label{tab:vision}
\vspace{-2.0em}
\end{table*}


Next we test our method on the large-scale backbone networks: ResNet-101~\citep{resnet} and Vision Transformers (ViT-L-32)~\citep{vit}, where the former consists of $\dim(\theta)\approx 43$ million parameters and the latter about $\dim(\theta)\approx 305$ million parameters. On the image classification vision tasks with datasets Pets~\citep{pets}, Flowers~\citep{flowers}, Aircraft~\citep{aircraft}, and DTD~\citep{dtd}, 
we randomly split the official training data into $50\%$ training and $50\%$ validation sets. 
As training such large networks from the scratch is very difficult, we take the publicly available pre-trained model weights in the form of Gaussian prior mean parameters in the Bayesian models, so that the variational density can find its support in the vicinity of the pre-trained weights~\citep{bayesdll}. We use the diagonal-covariance Gaussian for the variational density.

The results of test errors and uncertainty quantification are summarized in Table~\ref{tab:vision}. 
Note that GSM and BaM failed due to numerical explosion, and the traditional score-matching methods~\citep{score,fisher} at this scale quickly incurred out-of-memory error due to their computational overhead in differentiating twice over $\theta$ through neural networks.
We see that both ADVI and our method attain similar test prediction performance for both networks, but our method consistently outperforms ADVI in ECE and NLL uncertainty quantification metrics. This signifies better convergence 
with our score-based objective. 
For comparison with other approximate Bayesian inference methods, in Table~\ref{tab:vision} we also consider: the MC-dropout method~\citep{mc_dropout}, the stochastic gradient Langevin dynamics (SGLD)~\citep{sgld}, and the Laplace approximation. While the MC-dropout can be seen as variational inference with a special variational density family of mixtures of two spiky Gaussians, the other two methods are not variational inference methods. The results show that our approach still outperforms these inference methods consistently for both networks in both prediction errors and uncertainty quantification.

We also visualize the convergence of the our method and ADVI in Fig.~\ref{app_fig:vit_convergence} in Appendix. We show the test loss convergence instead of the negative ELBO on training data since the ELBO is merely a bound on the KL divergence and as the dimensionality grows at this model scale, it may not be a good metric to capture distribution divergence effectively. We see that our algorithm consistently converges faster to lower test loss values than ADVI.

\subsection{Time-series Forecasting 
}\label{sec:ts}

\begin{wraptable}[22]{r}{0.589\textwidth}
\setlength{\tabcolsep}{4.15pt}
\vspace{-4.85em}
\caption{Bayesian Koopa with Gaussian $q_\lambda(\theta)$. 
}
\centering
\begin{scriptsize}
\centering
\begin{tabular}{ccccc}
\toprule
 & Metric & Vanilla & ADVI & Ours \\ 
\midrule
\multirow{3}{*}{\rotatebox{90}{ECL\ \ \ \ }}\Tstrut & MSE $\downarrow$ & $0.1281 \pm 0.0002$ & $0.1231 \pm 0.0182$ & $\pmb{0.1228} \pm 0.0181$ \\
\cmidrule(lr){2-5} 
 & MAE $\downarrow$ & $\pmb{0.2303} \pm 0.0002$ & $0.2376 \pm 0.0198$ & $0.2374 \pm 0.0199$ \\
\cmidrule(lr){2-5} 
 & NLL $\downarrow$ & N/A & $0.5245 \pm 0.0524$ & $\pmb{0.5232} \pm 0.0513$ \\
\midrule
\multirow{3}{*}{\rotatebox{90}{ETTh2\ \ \ }} & MSE $\downarrow$ & $0.2413 \pm 0.0043$ & $0.2415 \pm 0.0048$ & $\pmb{0.2375} \pm 0.0028$ \\
\cmidrule(lr){2-5} 
 & MAE $\downarrow$ & $0.3101 \pm 0.0033$ & $0.3116 \pm 0.0048$ & $\pmb{0.3079} \pm 0.0038$ \\
\cmidrule(lr){2-5} 
 & NLL $\downarrow$ & N/A & $0.7239 \pm 0.0117$ & $\pmb{0.7145} \pm 0.0120$ \\
\midrule
\multirow{3}{*}{\rotatebox{90}{Ex.~Rate\ \ \ }} & MSE $\downarrow$ & $0.0475 \pm 0.0025$ & $0.0469 \pm 0.0014$ & $\pmb{0.0455} \pm 0.0014$ \\
\cmidrule(lr){2-5} 
 & MAE $\downarrow$ & $0.1523 \pm 0.0041$ & $0.1517 \pm 0.0027$ & $\pmb{0.1491} \pm 0.0026$ \\
\cmidrule(lr){2-5} 
 & NLL $\downarrow$ & N/A & $0.0084 \pm 0.0071$ & $\pmb{0.0015} \pm 0.0018$ \\
\midrule
\multirow{3}{*}{\rotatebox{90}{ILI\ \ \ \ \ }} & MSE $\downarrow$ & $2.2084 \pm 0.1340$ & $2.2233 \pm 0.0648$ & $\pmb{2.0764} \pm 0.0250$ \\
\cmidrule(lr){2-5} 
 & MAE $\downarrow$ & $0.9180 \pm 0.0292$ & $0.9299 \pm 0.0388$ & $\pmb{0.8862} \pm 0.0109$ \\
\cmidrule(lr){2-5} 
 & NLL $\downarrow$ & N/A & $1.8258 \pm 0.0082$ & $\pmb{1.7891} \pm 0.0063$ \\
\midrule
\multirow{3}{*}{\rotatebox{90}{Traffic\ \ \ \ \ }} & MSE $\downarrow$ & $0.4359 \pm 0.0002$ & $0.4302 \pm 0.0003$ & $\pmb{0.4290} \pm 0.0003$ \\
\cmidrule(lr){2-5} 
 & MAE $\downarrow$ & $0.2897 \pm 0.0012$ & $0.2870 \pm 0.0007$ & $\pmb{0.2857} \pm 0.0003$ \\
\cmidrule(lr){2-5} 
 & NLL $\downarrow$ & N/A & $1.0226 \pm 0.0003$ & $\pmb{1.0214} \pm 0.0003$ \\
\midrule
\multirow{3}{*}{\rotatebox{90}{Weather\ \ \ }} & MSE $\downarrow$ & $\pmb{0.1255} \pm 0.0006$ & $0.1273 \pm 0.0006$ & $0.1265 \pm 0.0003$ \\
\cmidrule(lr){2-5} 
 & MAE $\downarrow$ & $\pmb{0.1670} \pm 0.0008$ & $0.1698 \pm 0.0006$ & $0.1690 \pm 0.0003$ \\
\cmidrule(lr){2-5} 
 & NLL $\downarrow$ & N/A & $0.4152 \pm 0.0041$ & $\pmb{0.4103} \pm 0.0015$ \\
\bottomrule
\vspace{-0.5em}
\end{tabular}
\end{scriptsize}
\label{tab:ts_gaussian}
\end{wraptable}
We next test our method on the time-series forecasting problem that has strong real-world impacts. We specifically adopt the recent compute/memory-efficient  (deep) neural network called Koopa~\citep{koopa} that is based on the Fourier filter and the function space dynamic predictor based on the Koopman theory~\citep{koopman}. We turn the Koopa network into a BNN where the number of network parameters $\dim(\theta)$ ranges from 0.2M to 22M. We consider the Gaussian variational family (Sec.~\ref{sec:ts_nf} for the normalizing-flow family).
Following the experimental protocol from~\citep{koopa}, we use the six real-world time-series benchmark datasets, and the forecasting window size is $48$ (24 for ILI) with the covariate window size 96 (48 for ILI). 
Details of datasets and the experimental settings are in Appendix~\ref{appsec:expmt_details}. 
The results in Table~\ref{tab:ts_gaussian} show that our approach achieves (marginal) improvements over the non-Bayesian (vanilla) Koopa net, and consistently outperforms the ELBO-based VI. The smaller test negative log-likelihood scores, 
computed by 5 posterior samples with Gaussian mixture fitting, also indicate that our approach yields better uncertainty quantification.

\subsection{Normalizing Flow Variational Density}\label{sec:ts_nf}
\vspace{-0.8em}
Beyond Gaussian, we also consider the normalizing flow (NF)~\citep{pmlr-v37-rezende15} model for the variational density. More specifically, $\theta\sim q_\lambda(\theta)$ is defined as: $\theta = T_\lambda(z)$ for $z\sim \mathcal{N}(0,I)$ where $T_\lambda(z)$ is a flow neural network model with parameters $\lambda$. By the change of variables, we get:
\begin{align}
\log q_\lambda(\theta) = 
\log \mathcal{N}(T_\lambda^{-1}(\theta); 0, I) + \log |\det \nabla_\theta T_\lambda^{-1}(\theta)|
\end{align}
\begin{wraptable}[10]{r}{0.43\textwidth}
\setlength{\tabcolsep}{4.5pt}
\vspace{-1.95em}
\caption{
Bayesian Koopa with 
NF $q_\lambda(\theta)$. 
}
\centering
\begin{scriptsize}
\centering
\begin{tabular}{cccc}
\toprule
 & Metric & ADVI & Ours \\ 
\midrule
\multirow{2}{*}{\rotatebox{90}{Ex.~Rate \ \ }}\Tstrut & MSE $\downarrow$ & $0.3593 \pm 0.0271$ & $\pmb{0.2922} \pm 0.0162$ \\
\cmidrule(lr){2-4} 
 & MAE $\downarrow$ & $0.3294 \pm 0.0059$ & $\pmb{0.3125} \pm 0.0028$ \\
\cmidrule(lr){2-4} 
 & NLL $\downarrow$ & $0.7636 \pm 0.0107$ & $\pmb{0.7516} \pm 0.0090$ \\
\midrule
\multirow{3}{*}{\rotatebox{90}{Weather\ \ \ }} & MSE $\downarrow$ & $0.7965 \pm 0.0071$ & $\pmb{0.7695} \pm 0.0470$ \\
\cmidrule(lr){2-4} 
 & MAE $\downarrow$ & $0.3924 \pm 0.0066$ & $\pmb{0.3904} \pm 0.0042$ \\
\cmidrule(lr){2-4} 
 & NLL $\downarrow$ & $1.3249 \pm 0.0070$ & $\pmb{1.3171} \pm 0.0061$ \\
\bottomrule
\vspace{-0.5em}
\end{tabular}
\end{scriptsize}
\label{tab:ts_nf}
\vspace{1.0em}
\end{wraptable}
We test it on the time-series data from Sec.~\ref{sec:ts} with a modified Koopa network. Note that this is a very challenging application for NF since the state space is the neural net parameter space, and NF models usually suffer from high-dim state space. To this end, the dynamic and hidden dimensions of the Koopa network are parsimoniously reduced to yield $\dim(\theta)<5000$ so that numerical issues in high-dimensional state NF can be circumvented. Then we use the planar flow~\citep{pmlr-v37-rezende15} with a single layer. 
Results on two datasets \texttt{exchange-rate} and \texttt{weather} with forecasting window size $48$ are summarized in Table~\ref{tab:ts_nf}. Our score-based VI method consistently outperforms ADVI.

\subsection{Ablation Study}\label{sec:ablation}
\vspace{-0.5em}
\textbf{Impact of the proximal term.}
To see the impact of the proximal term in our method, we turn off the proximal term by setting the impact hyperparameter $\alpha_t\!=\!0$ for all $t$. For the order-5 Gaussian mixture target case (corresponding to Fig.~\ref{fig:p_mog}(b)), we have observed that the forward KL scores at the last iteration are: 0.0153 (with $\alpha_t\!=\!t/T$) and 0.4509 (with $\alpha_t\!=\!0$). 
This signifies the importance of the proximal term. 

\textbf{Impact of the number of MC samples $S$.}
Beyond default $S\!=\!1$, we also increase the number of Monte Carlo samples $S$ in estimating the expectation in the loss. For the Gaussian mixture target case (corresponding to Fig.~\ref{fig:p_mog}(b)), we tested our model with $S=5,10,20$. The iteration numbers that first hit the forward KL score less than $0.05$ are: 946 (default $S\!=\!1$), 96 ($S\!=\!5$), 89 ($S\!=\!10$), and 81 ($S\!=\!20$). So clearly increasing $S$ certainly helps expedite convergence. However, increasing $S$ greater than 1 may not be practical for large-scale BNNs due to computational overhead.

\subsection{Running Time and Memory Footprint}\label{sec:complexity}
\vspace{-0.5em}
\setlength{\intextsep}{4.0pt}%
\setlength{\columnsep}{4.0pt}%
\begin{wraptable}[6]{r}{0.41\textwidth}
\setlength{\tabcolsep}{2.75pt}
\vspace{-3.1em}
\centering
\caption{
Wall clock time and memory. 
}
\centering
\begin{scriptsize}
\centering
\begin{tabular}{ccccc}
\toprule
\multirow{2}{*}{Pets} & 
\multicolumn{2}{c}{Time (second)} &
\multicolumn{2}{c}{GPU memory (GB)} \\
\cmidrule(lr){2-3} \cmidrule(lr){4-5} 
& 
ResNet101 & ViT-L-32 &
ResNet101 & ViT-L-32 \\
\midrule
Vanilla\Tstrut & $0.1263$ & $0.2034$ & $0.30$ & $2.09$ \\
\midrule
ADVI\Tstrut & $0.2463$ & $0.3007$ & $0.96$ & $6.85$ \\
\midrule
%
Ours\Tstrut & $0.3256$ & $0.4323$ & $0.33$ & $2.30$ \\
\bottomrule
\vspace{-1.0em}
\end{tabular}
\end{scriptsize}
\label{tab:vision_time_memory}
\end{wraptable}
We do actual computational cost comparison between our method and ADVI. In Table~\ref{tab:vision_time_memory} we show per-iteration wall clock time and GPU memory footprints 
on Pets dataset. We use batch size 16 
on a single Nvidia Tesla V100 GPU. 
Although our method needs to maintain both previous $q_t$ and current $q$, 
and run multiple inner iteration steps, 
we have 
comparable computational cost 
compared to ELBO-based ADVI. 

\subsection{Additional Experiments}
\vspace{-0.6em}
In Sec.~\ref{appsec:additional_expmts} we have additional experimental results including: running time vs.~number of MC samples, statistical significance tests, and comparison with Fisher and Score methods.

\section{Conclusion}\label{sec:conclusion}
\vspace{-1.0em}
We have introduced a novel score-matching variational inference algorithm that can be effectively applicable to large-scale Bayesian neural network posterior inference. 
By combining the score-matching loss and the proximal penalty term in iterations, we effectively avoid the reparametrized sampling trick required in ELBO-based and traditional score-based methods, which makes our algorithm more flexible and scalable. 
Unlike the GSM methods that are confined to Gaussian variational density and noise-free scores, our algorithm allows for noisy unbiased mini-batch scores through stochastic gradients. Consequently, as demonstrated in our empirical study, the proposed algorithm is scalable to large-scale problems including Bayesian deep models with Vision Transformers and the time-series forecasting benchmarks with domain-tailored deep networks. 
On various real-world benchmarks, we have demonstrated the effectiveness of our algorithm in terms of faster convergence, 
high flexibility, and scalability in contrast to  existing score-based methods limited to small-sized problems.

\clearpage


{
\bibliographystyle{plainnat}
\bibliography{main}
}







\clearpage

\appendix

\centerline{\huge\textbf{{Appendix}}}



\section{Lemma~\ref{lemma:sm=dm} for Convergence Analysis}\label{appsec:lemma}

\begin{lemma}[Score matching implies distribution matching]
Assume that some regularity conditions hold for two distributions $p(\theta)$ and $q(\theta)$ (e.g., smooth distributions and Lipschitz continuous scores). 
We have $p = q$ if $\nabla_\theta \log p(\theta) = \nabla_\theta \log q(\theta)$ for all $\theta$ almost surely.
\label{lemma:sm=dm}
\end{lemma}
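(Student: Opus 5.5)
The plan is to show that equal scores force the log-densities to differ by a constant, and that normalization then forces the constant to be zero.

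Let $\Theta=\mathbb{R}^d$ be the common support. Under the regularity assumptions, $p$ and $q$ are positive and continuously differentiable there, so $\log p$ and $\log q$ are $C^1$. Define $h(\theta) := \log p(\theta) - \log q(\theta)$. The hypothesis gives $\nabla_\theta h(\theta) = 0$ for almost every $\theta$.

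\textbf{Step 1: upgrade ``almost surely'' to ``everywhere''.} The scores are Lipschitz, hence continuous, so $\nabla_\theta h$ is continuous. A continuous function vanishing on a set of full measure vanishes on its closure, which is all of $\mathbb{R}^d$. So $\nabla_\theta h \equiv 0$ everywhere.

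\textbf{Step 2: $h$ is constant.} Fix $\theta_0$ and any $\theta$, and integrate the gradient along the segment between them (fundamental theorem of calculus):
\begin{align}
h(\theta)-h(\theta_0)=\int_0^1 \nabla h\big(\theta_0+\tau(\theta-\theta_0)\big)^\top(\theta-\theta_0)\,d\tau = 0 .
\end{align}
This needs the domain to be connected. $\mathbb{R}^d$ is connected; for a general support one must assume it is a connected open set. Hence $p(\theta) = c\, q(\theta)$ with $c = e^{h(\theta_0)} > 0$.

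\textbf{Step 3: normalization pins down $c$.} Integrating both sides over $\Theta$ gives $1 = \int p = c\int q = c$. Therefore $p = q$ everywhere, and in particular as distributions.

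The main obstacle is not the calculus but making the regularity conditions precise. Two things can break the argument:
\begin{itemize}
\item If the supports are disconnected, matching scores only gives $p = c_k q$ on each component $\Theta_k$, with possibly different constants $c_k$. Distinct mixture weights across separated components would then be undetectable.
\item If one density vanishes somewhere, the log-densities and scores are undefined there.
\end{itemize}
So I would state the lemma explicitly for strictly positive $C^1$ densities on $\mathbb{R}^d$, or more generally on a common connected open support. I would also note that this is exactly the setting of Theorem~\ref{appthm:convergence}. There, however, the scores only converge, $\nabla\log q_T \to s$, so an extra limiting argument is needed to pass from convergence of scores to convergence of distributions. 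Examples are a uniform Lipschitz bound with tightness, or a log-Sobolev/Fisher-divergence-type inequality. I would flag this as beyond the lemma proper.
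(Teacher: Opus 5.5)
Your proof is correct, but it takes a different route from the paper. The paper argues dynamically. Via the stationary Fokker--Planck equation, $p$ is invariant for the Langevin SDE $d\theta_t=\tfrac12\nabla_\theta\log p(\theta_t)\,dt+dW_t$, and likewise $q$ for the SDE with drift $\tfrac12\nabla_\theta\log q$. Equal scores make the two SDEs identical, hence $p=q$.

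The real content of that route is a step the paper leaves unstated: the diffusion must have a \emph{unique} invariant distribution. Establishing this requires well-posedness of the SDE, which is where the Lipschitz-score hypothesis is used, plus an irreducibility/strong-Feller property. Both tacitly rely on the scores being defined on all of $\mathbb{R}^d$, i.e.\ $p,q>0$ on a connected domain, which are exactly the conditions you make explicit.

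Your argument bypasses all of this: $\nabla_\theta(\log p-\log q)=0$ on a connected domain forces $\log p-\log q$ to be constant, and normalization sets the constant to zero. This is self-contained and needs only continuity of the scores rather than Lipschitz continuity. It also shows transparently why the claim fails for disconnected supports. The paper's route mainly buys an interpretation linking score matching to Langevin sampling.

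Two small remarks. On a general connected open support, integrate along polygonal paths rather than a single segment, since the segment may leave the domain. Your closing observation is also well taken: the convergence theorem only yields $\nabla_\theta\log q_T\to s$, so the lemma as stated (exact equality of scores) does not by itself deliver $q_T\to\pi$.
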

\begin{proof}[Proof of Lemma~\ref{lemma:sm=dm}]
Although there are other related proofs for this result~\citep{fisher,score}, here we provide a simpler reasoning based on the Fokker-Planck equation for stochastic differential equations (SDE). 
We consider the first-order Langevin-type SDE:
\begin{align}
d\theta_t = \frac{1}{2} \nabla_\theta \log p(\theta_t) dt + dW_t
\label{eq:sde_p}
\end{align}
It is known that $p(\theta)$ is the stationary distribution of the dynamics (\ref{eq:sde_p}) due to the Fokker-Planck equation:
\begin{align}
0 = \frac{\partial p(\theta)}{\partial t} = -\textrm{div} \bigg\{ p(\theta) \cdot \frac{1}{2} \nabla_\theta \log p(\theta) \bigg\} + \frac{1}{2} \Delta p(\theta) 
= -\frac{1}{2} \textrm{div} \nabla_\theta p(\theta) + \frac{1}{2} \Delta p(\theta) = 0
\end{align}
where $\textrm{div}(\cdot)$ is the divergence operator and $\Delta(\cdot)$ is the Laplace operator. 
Now we do the similar reasoning with $q$ in place of $p$, yielding that the SDE 
\begin{align}
d\theta_t = \frac{1}{2} \nabla_\theta \log q(\theta_t) dt + dW_t
\label{eq:sde_q}
\end{align}
admits $q(\theta)$ as its stationary distribution. Since (\ref{eq:sde_p}) and (\ref{eq:sde_q}) are identical, we have $p=q$. 
\end{proof}

\section{Additional Experiments: PosteriorDB}\label{appsec:more_expmt}



We test competing methods on the PosteriorDB benchmark~\citep{posteriordb}, a database containing a wide range of realistic target densities in Bayesian posterior inference, which provides the true scores $\nabla_\theta \log \pi(\theta)$ for various models and datasets. The database provides the Stan code~\citep{stan} for accessing the true scores, and similarly as~\citep{gsmvi}, we use the BridgeStan~\citep{bridgestan} Python interface to access the scores in our codebase. 

For this study we consider four datasets -- two Gaussian targets: \texttt{diamonds} (generalized linear models) and \texttt{arK} (time-series); two non-Gaussian targets: \texttt{gp-pois-regr} (Gaussian processes) and \texttt{low-dim-gauss-mix} (Gaussian mixtures). And we focus on two experimental settings: i) We first compare the convergence of the proposed variational inference algorithm with competing methods; 
ii) We consider noisy unbiased score experiments by injecting noise to the score. 
For the latter, the noisy score is  specifically defined as:
\begin{align}
\hat{s}(\theta) := s(\theta) + \beta \cdot ||s(\theta)||_\infty \cdot \mathcal{N}(0,I)
\label{eq:noisy_score}
\end{align}
where $\beta$ is the noise level we vary -- the larger/smaller $\beta$ implies more/less noise injected to the score (e.g., $\beta=0$ corresponds to noise-free exact score). Clearly the noise injection process (\ref{eq:noisy_score}) makes $\hat{s}(\theta)$ unbiased (i.e., $\mathbb{E}[\hat{s}(\theta)] = s(\theta) = \nabla_\theta \log \pi(\theta)$). We will control $\beta$ and see how the competing VI methods would perform for different noise levels.  

For all four datasets we adopt the Gaussian variational density family since we observe that enriching the family to the mixture of Gaussians even in the non-Gaussian datasets did not improve the approximation quality further. 

\textbf{Convergence with noise-free scores.}
In Fig.~\ref{fig:posteriordb_convergence} we show the evolution of the negative ELBO metric as the number of score calls increases (averaged over 5 runs for each method). For all datasets we see that all methods converge to the same optimal negative ELBO values. The iterative closed-form method GSM tends to converge the fastest on \texttt{gp-pois-regr} and \texttt{low-dim-gauss-mix}, but its convergence becomes slower than our approach on the other datasets. Another score-based method BaM converges as fast as GSM on \texttt{gp-pois-regr} and \texttt{arK-arK}, but it exhibits less stability in optimization. Consistently for all datasets, our method converges substantially faster than ADVI, which can be largely attributed to our objective based on score matching. 

\begin{figure}[t!]
\begin{center}
\centering
\includegraphics[trim = 5mm 5mm 6mm 4mm, clip, scale=0.205 
]{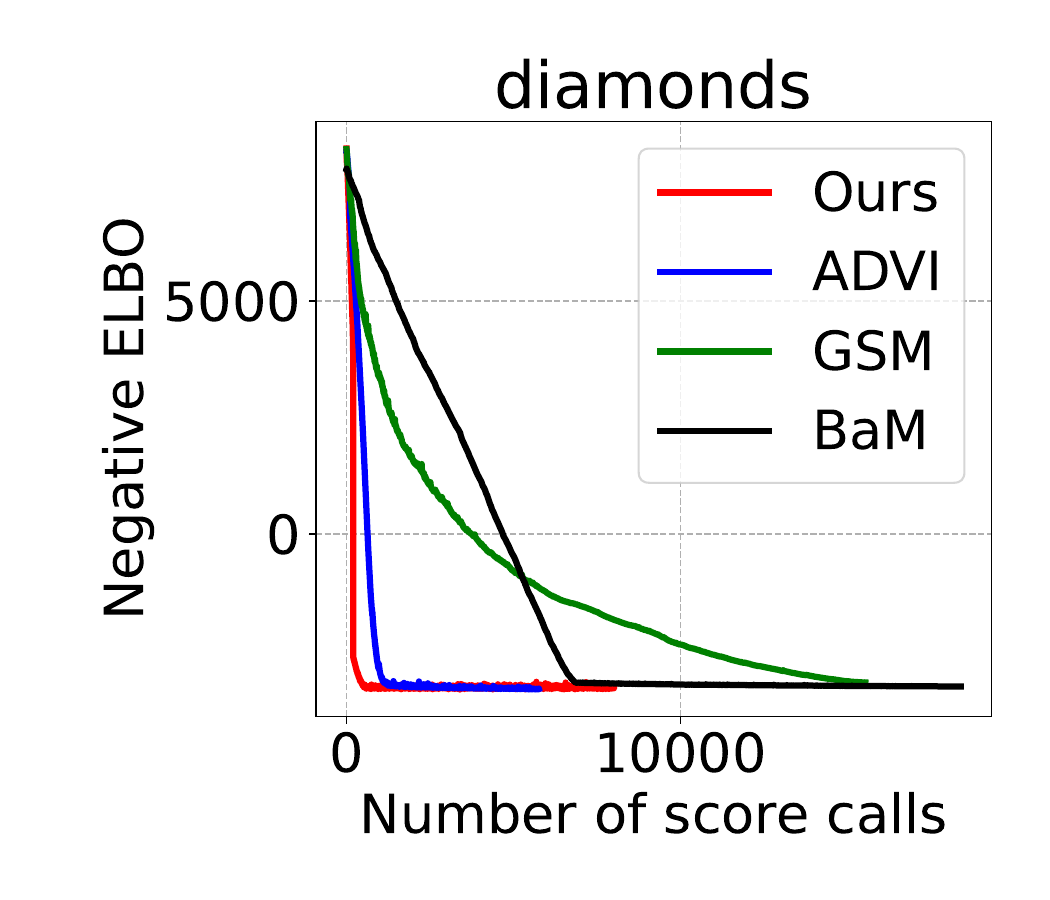}
%
\includegraphics[trim = 5mm 5mm 6mm 4mm, clip, scale=0.205 
]{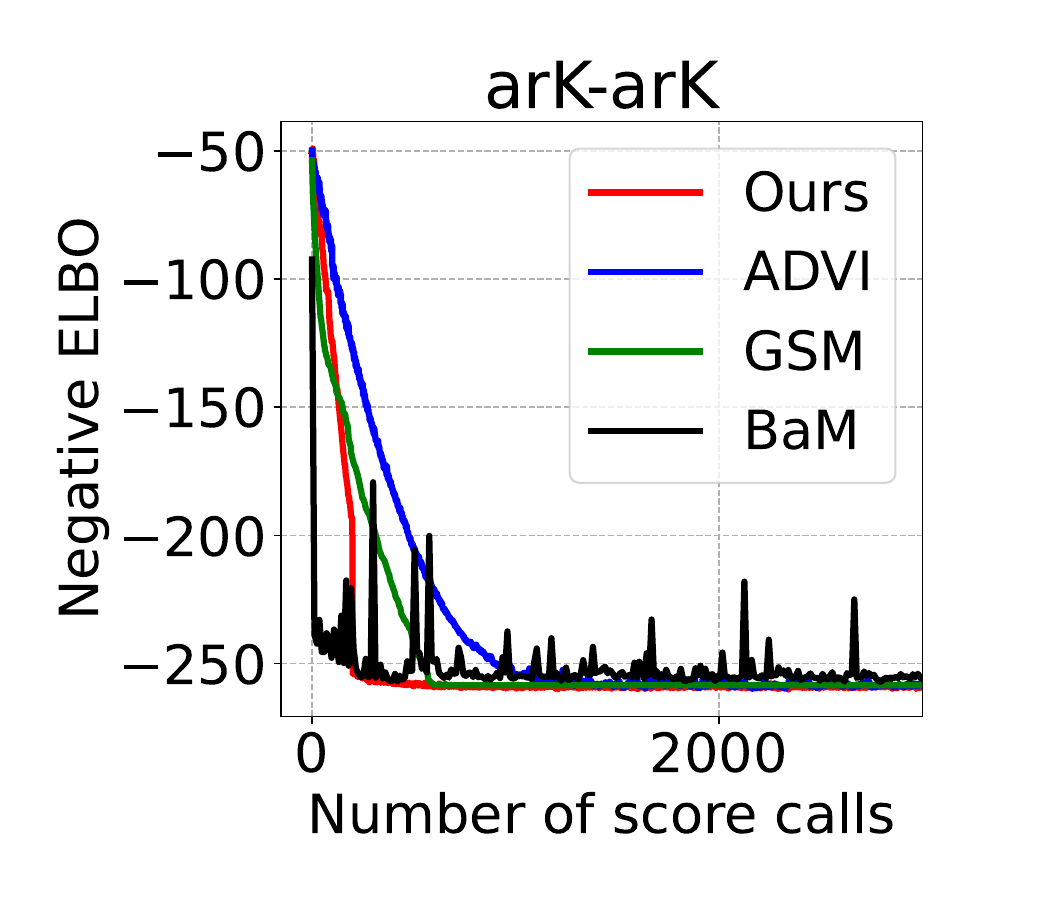}
%
\includegraphics[trim = 5mm 5mm 6mm 4mm, clip, scale=0.205 
]{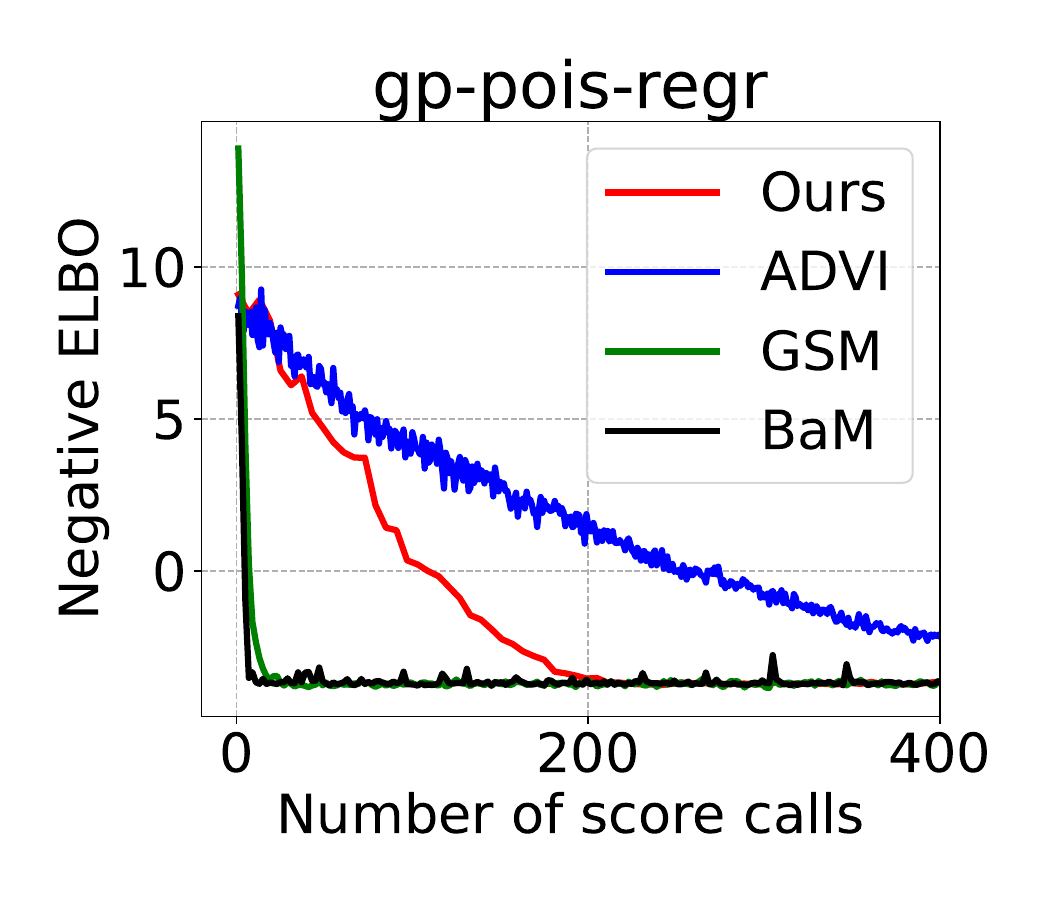}
%
\includegraphics[trim = 5mm 5mm 6mm 4mm, clip, scale=0.205 
]{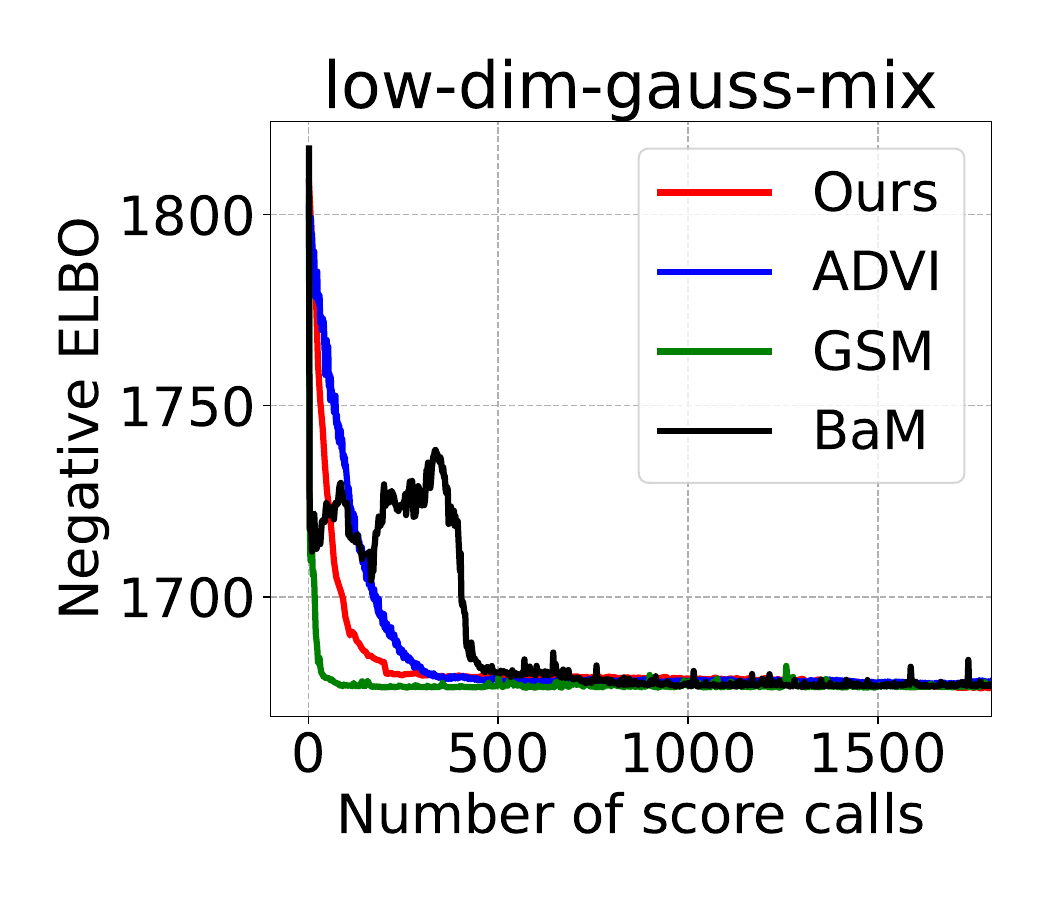}
\end{center}
\vspace{-1.5em}
\caption{
(PosteriorDB) Convergence with noise-free scores.
}
\label{fig:posteriordb_convergence}
\end{figure}

\begin{figure}
\begin{center}
\centering
\includegraphics[trim = 5mm 5mm 6mm 4mm, clip, scale=0.205 
]{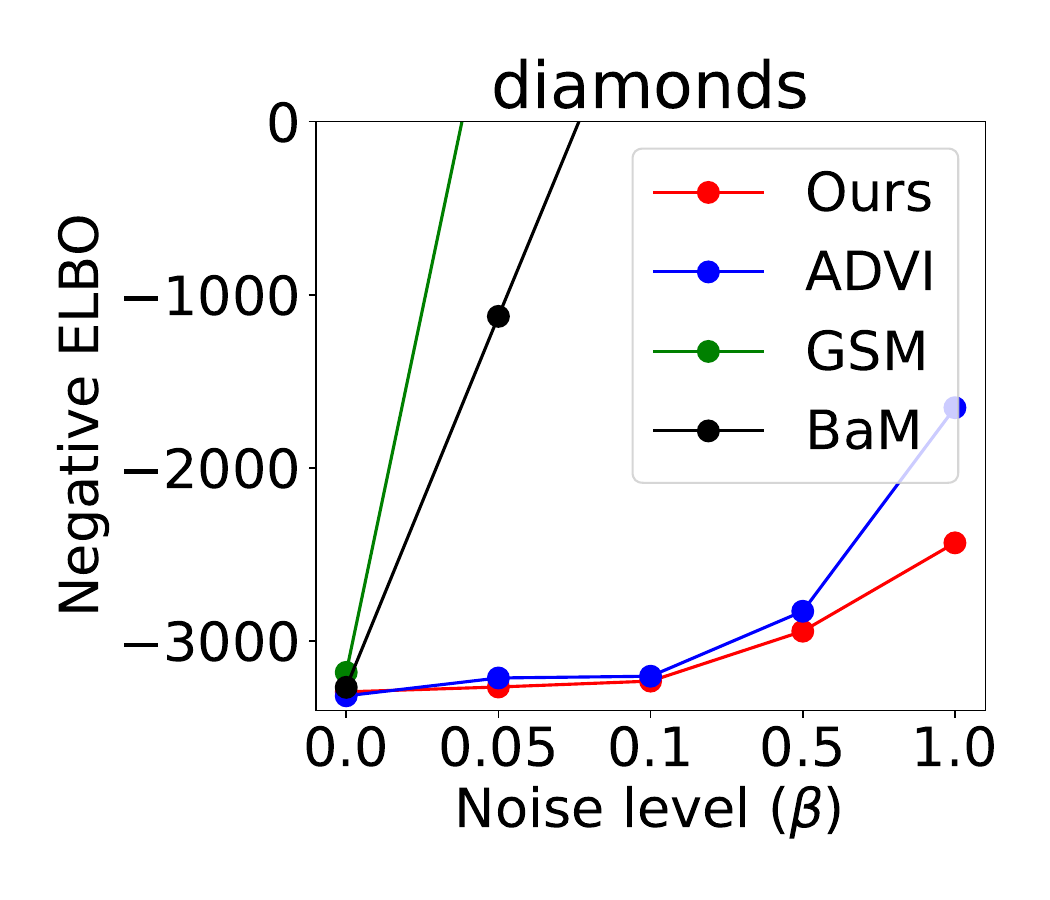}
%
\includegraphics[trim = 5mm 5mm 6mm 4mm, clip, scale=0.205 
]{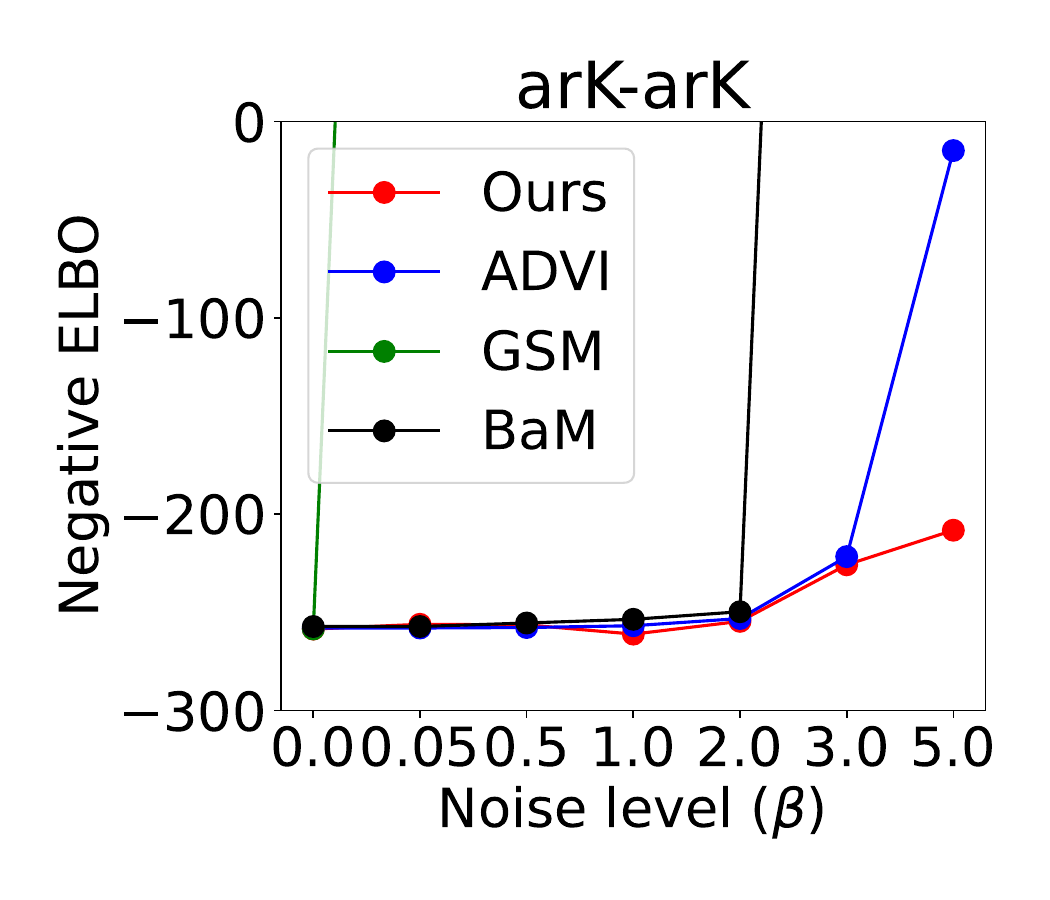}
%
\includegraphics[trim = 5mm 5mm 6mm 4mm, clip, scale=0.205 
]{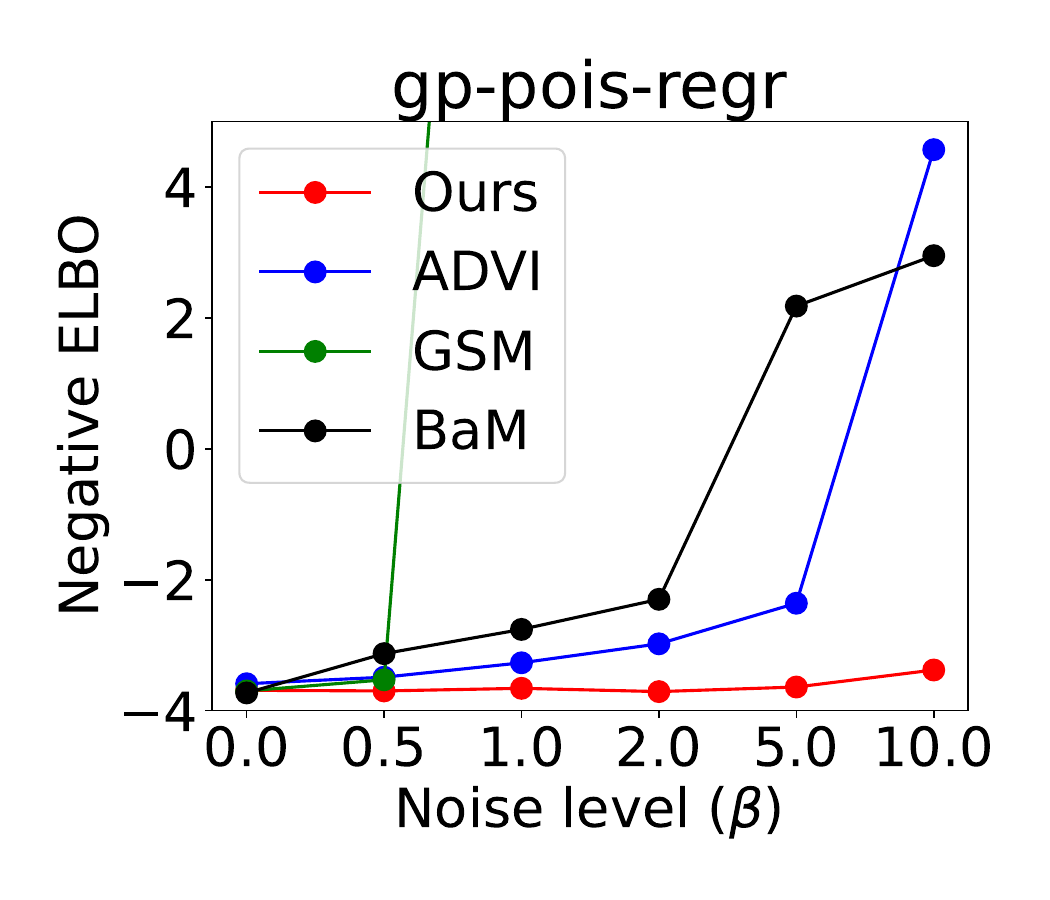}
%
\includegraphics[trim = 5mm 5mm 6mm 4mm, clip, scale=0.205 
]{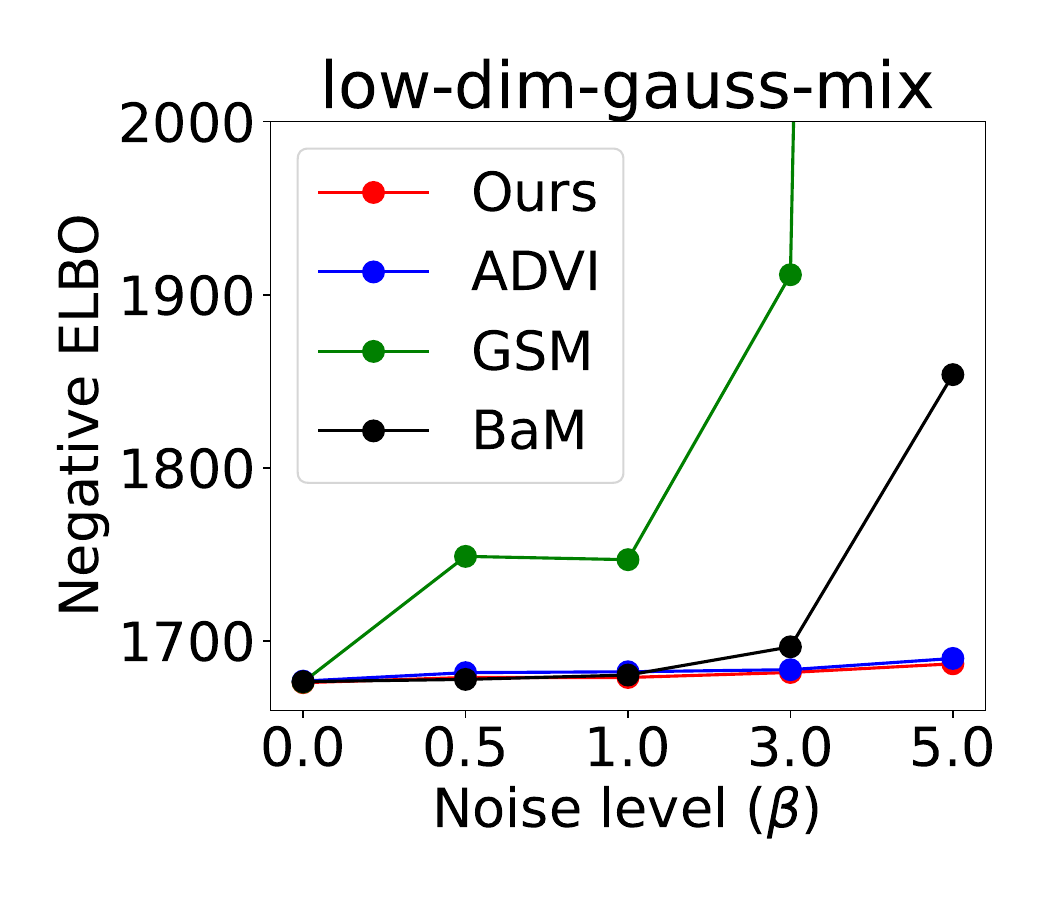}
\end{center}
\vspace{-1.5em}
\caption{
(PosteriorDB) Robustness to noise in noisy score scenarios. Each point is the negative ELBO for the final converged model. 
}
\label{fig:posteriordb_noisy}
\end{figure}

\textbf{Robustness to noise in noisy score scenarios.}
We next see how the final optimized negative ELBO values change when we inject unbiased noise to the true score as (\ref{eq:noisy_score}). By varying the noise level $\beta$ we plot the final converged values in Fig.~\ref{fig:posteriordb_noisy}. 
We see that the competing score-based methods GSM and BaM fail to converge to reasonable negative ELBO values since these methods inherently assume that the scores are noise-free. Our approach and ADVI are stochastic-gradient methods, and both of them exhibit more robust results than GSM and BaM. But it is evident that our approach is the most robust to the noise in scores even with extreme noise levels as indicated by the rightmost points in the plots. On the contrary ADVI often failed to attain near optimal values in those extreme conditions.

\section{Details of the Experiments}\label{appsec:expmt_details}

For the synthetic Gaussian and Gaussian mixture experiments, our approach takes $N=20$ (the number of steps for each iteration) and the number of samples $S=1$. ADVI uses the same $S=1$ where we choose the Gumbel-softmax temperature $0.05$ for Gaussian mixture reparametrized sampling, determined by some pilot runs. For GSM and BaM, as suggested in~\citep{gsmvi}, we use the number of (batch) samples 2. To report the forward KL metric, we use the Monte Carlo estimation with 500 samples from the target $\pi$. 
For the Gaussian mixture parameters, the mean parameters are randomly sampled from $\mathcal{N}(0,1)$, and the covariances are randomly chosen as $I + a XX^\top$ where the elements of the matrix $X$ are sampled from $\mathcal{N}(0,1)$ and $a \sim \mathcal{N}(0, 0.3)$. We repeat this procedure until we have positive definite covariances. Each element of the mixing proportions is chosen as $(1+b)$ with $b \sim \mathcal{N}(0,0.3)$ in the logit space, followed by softmax transformation.

For PosteriorDB experiments, our approach takes $N=2$ steps for each iteration and the number of samples $S=1$. ADVI uses the same $S=1$ while GSM and BaM use the number of (batch) samples 2. To compute the negative ELBO metric, we use the Monte Carlo estimation with 1000 samples. 

For the MNIST experiments, for all competing methods, we set the maximum training epochs as 100, learning rate $10^{-2}$, batch size 128, and the SGD optimizer with momentum 0.5. The number of $\theta$ samples $S=1$. The MLP network we adopt has three hidden layers with 1000 units and ReLU nonlinearity, followed by the final linear prediction head. 
For the predictive distribution $p(y^*|x^*,D) = \int p(y^*|x^*,\theta) p(\theta|D) d\theta$, the number of samples from the posterior $p(\theta|D)$ is 5.

For the large-scale vision experiments, the number of epochs is 50, batch size 16, and the number of samples $S=1$. We take $N=2$ steps. The learning rate is $10^{-4}$ for the feature extraction part of the networks and $10^{-2}$ for the classification readout head. 
Also to turn the classification cross entropy loss into likelihood, we take the widely-used transformation for the likelihood model:
$p(D|\theta) = e^{-loss(\theta;D)/\tau}$ 
where $loss(\theta;D)$ is the cross entropy loss of the underlying deep network with parameters $\theta$ on data $D=\{(x,y)\}$, and $\tau$ is the temperature hyperparameter chosen from domain knowledge. We use $\tau=10^{-6}$ for all cases/datasets. To compute the ECE and test NLL scores, we take 5 samples from the learned posteriors. Each method is run with 5 random seeds.

For the time-series forecasting experiments, we have the same setting for our approach as the vision experiments except for the learning rate $10^{-3}$, number of epochs 10, and batch size 32. The other Koopa hyperparameters follow the default values from~\citep{koopa}. 
The six benchmark datasets are for multivariate forecasting, and they are: ECL
(UCI~\citep{uci}), ETT~\citep{ts_ett}, Exchange~\citep{ts_exchange}, ILI (CDC), Traffic (PeMS), and Weather (Wetterstation). Similarly as~\citep{koopa}, we follow the
data pre-processing and split protocols used in TimesNet~\citep{timenet}. 
To compute the test NLL scores, we sample 5 samples from the learned posterior, and formed a mixture of 5 Gaussians centered at these samples with variances chosen using the validation sets. Each method is run with 10 random seeds.

\section{Additional Experiments}\label{appsec:additional_expmts}

\subsection{Convergence on Visual Recognition Tasks}

We visualize the convergence of the our method and ADVI in Fig.~\ref{app_fig:vit_convergence}. We show the test loss convergence instead of the negative ELBO on training data since the ELBO is merely a bound on the KL divergence and as the dimensionality grows at this model scale, it may not be a good metric to capture distribution divergence effectively. We see that our algorithm consistently converges faster to lower test loss values than ADVI.

\begin{figure}[t!]
\begin{center}
\centering
\includegraphics[trim = 4mm 5mm 6mm 4mm, clip, scale=0.195 
]{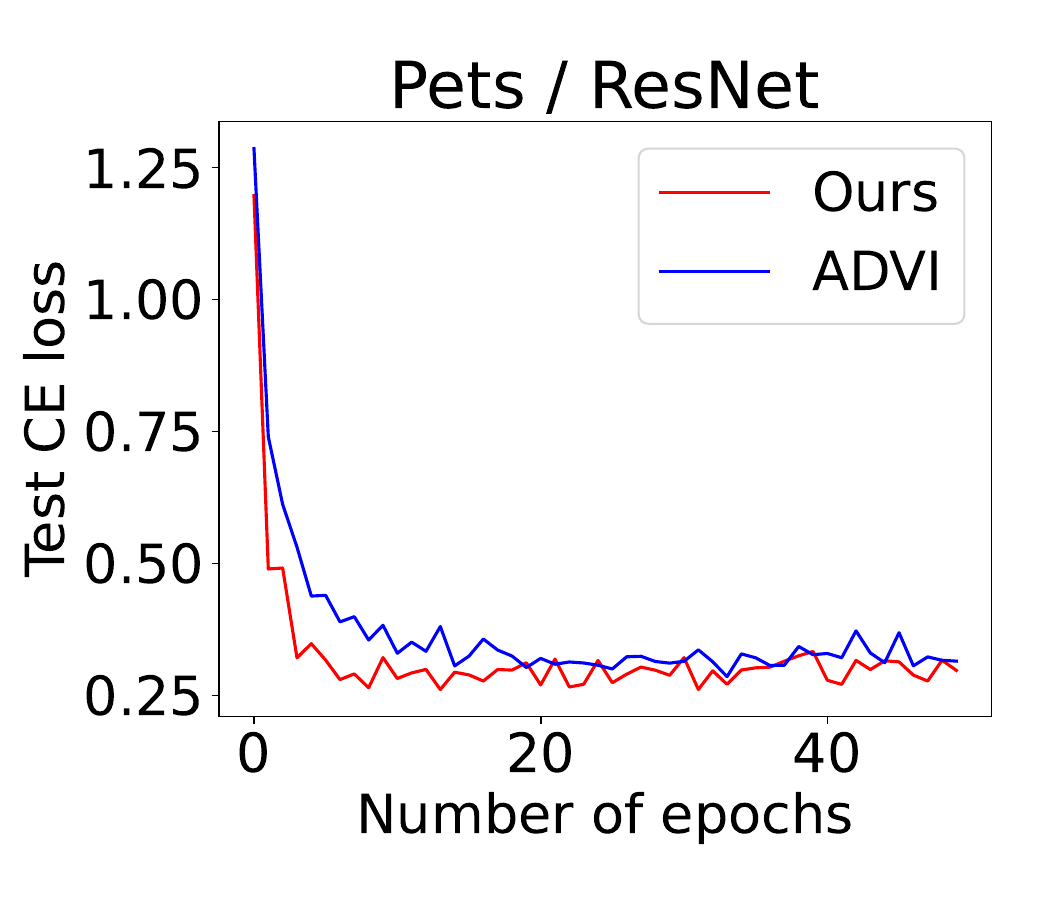}
\
\includegraphics[trim = 4mm 5mm 6mm 4mm, clip, scale=0.195 
]{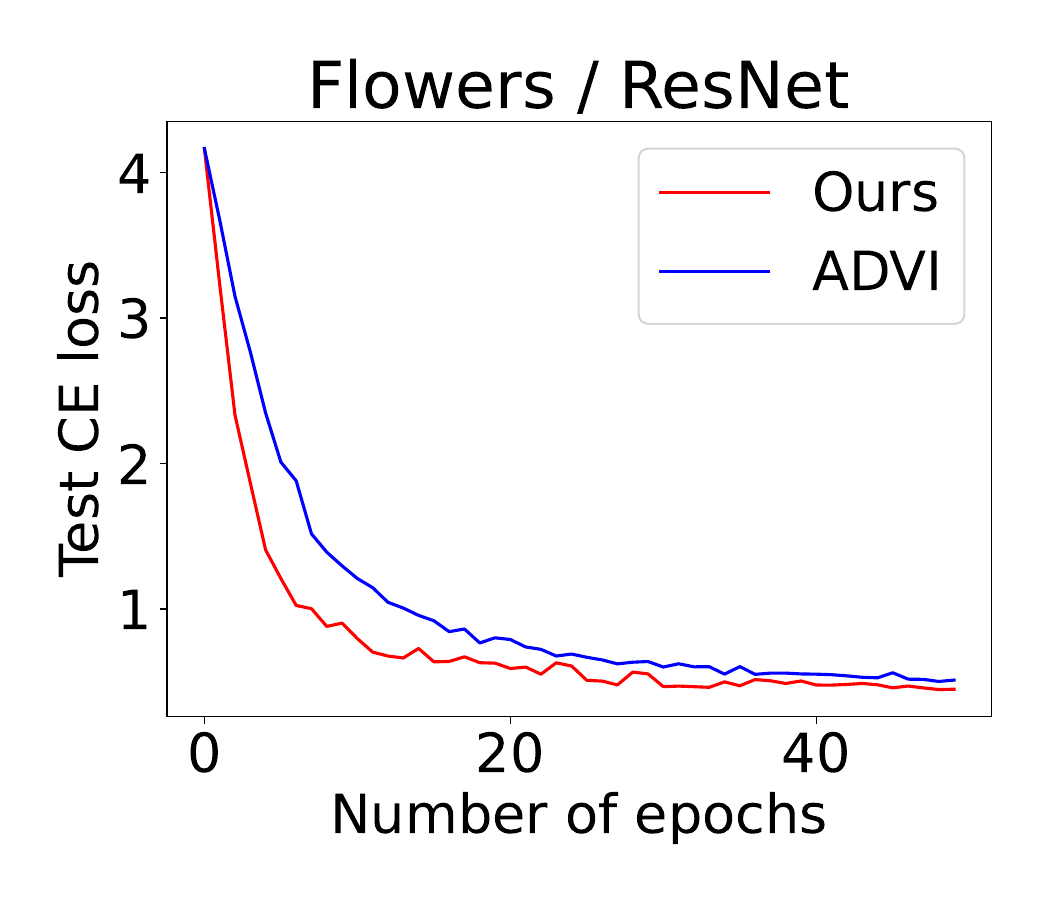}
\
\includegraphics[trim = 4mm 5mm 6mm 4mm, clip, scale=0.195 
]{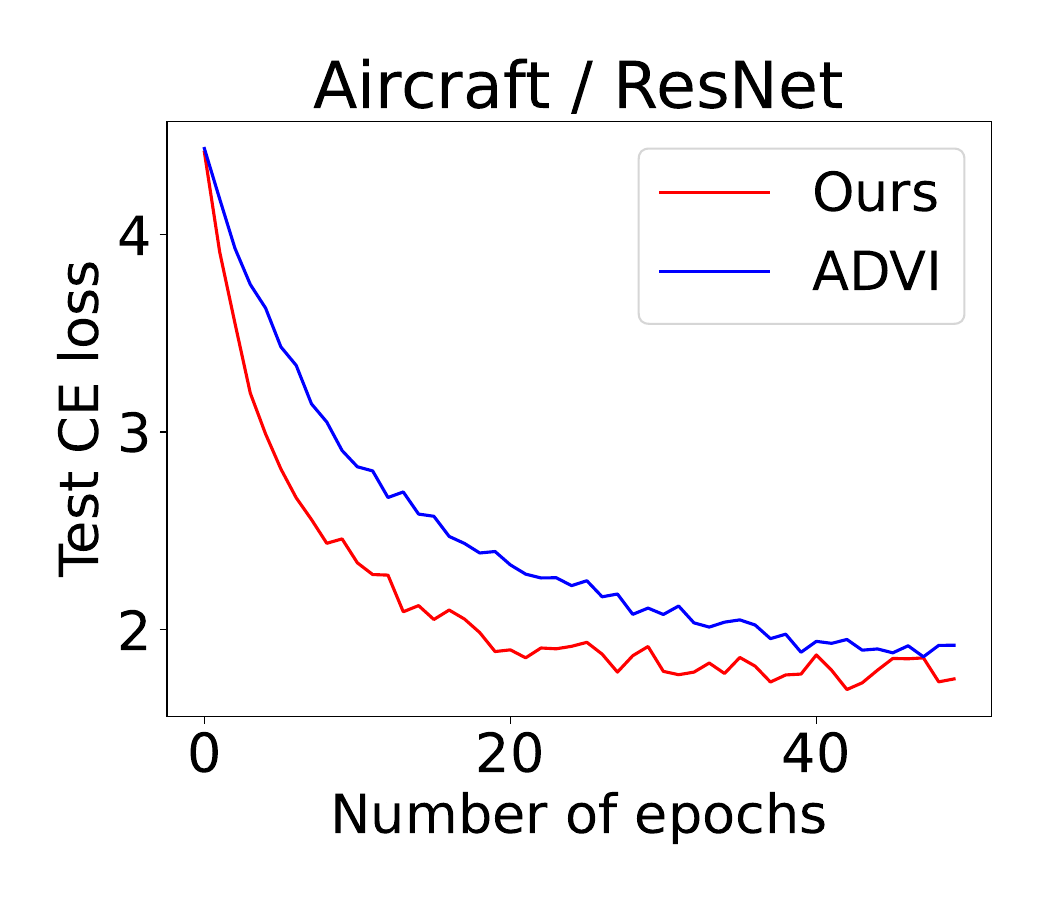}
\
\includegraphics[trim = 4mm 5mm 6mm 4mm, clip, scale=0.195 
]{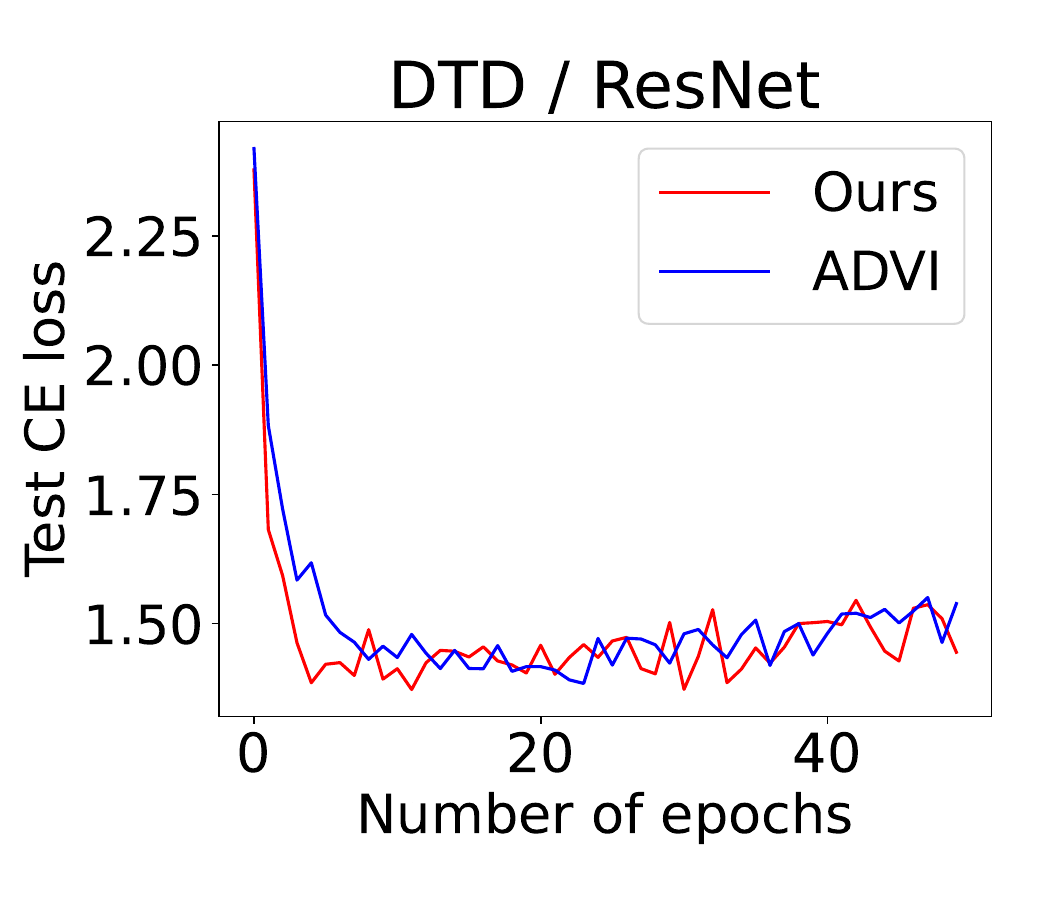}
\\
\includegraphics[trim = 4mm 5mm 6mm 4mm, clip, scale=0.195 
]{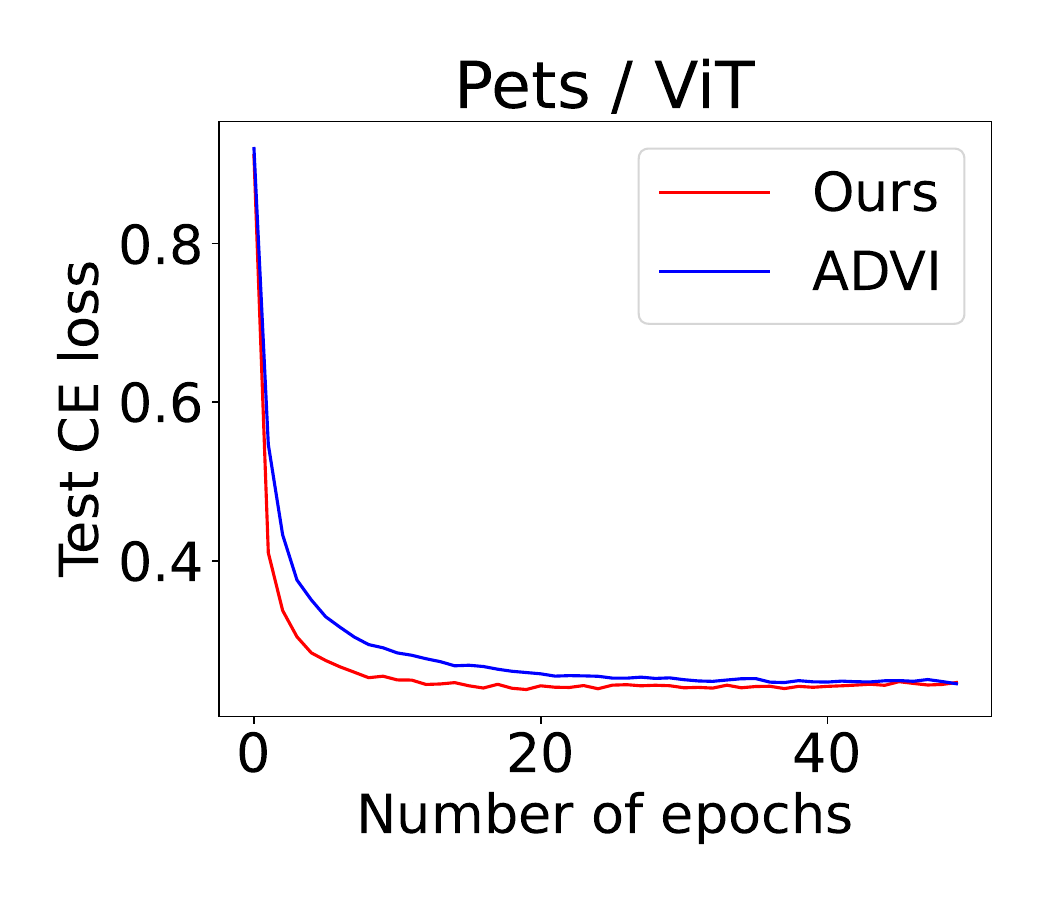}
\
\includegraphics[trim = 4mm 5mm 6mm 4mm, clip, scale=0.195 
]{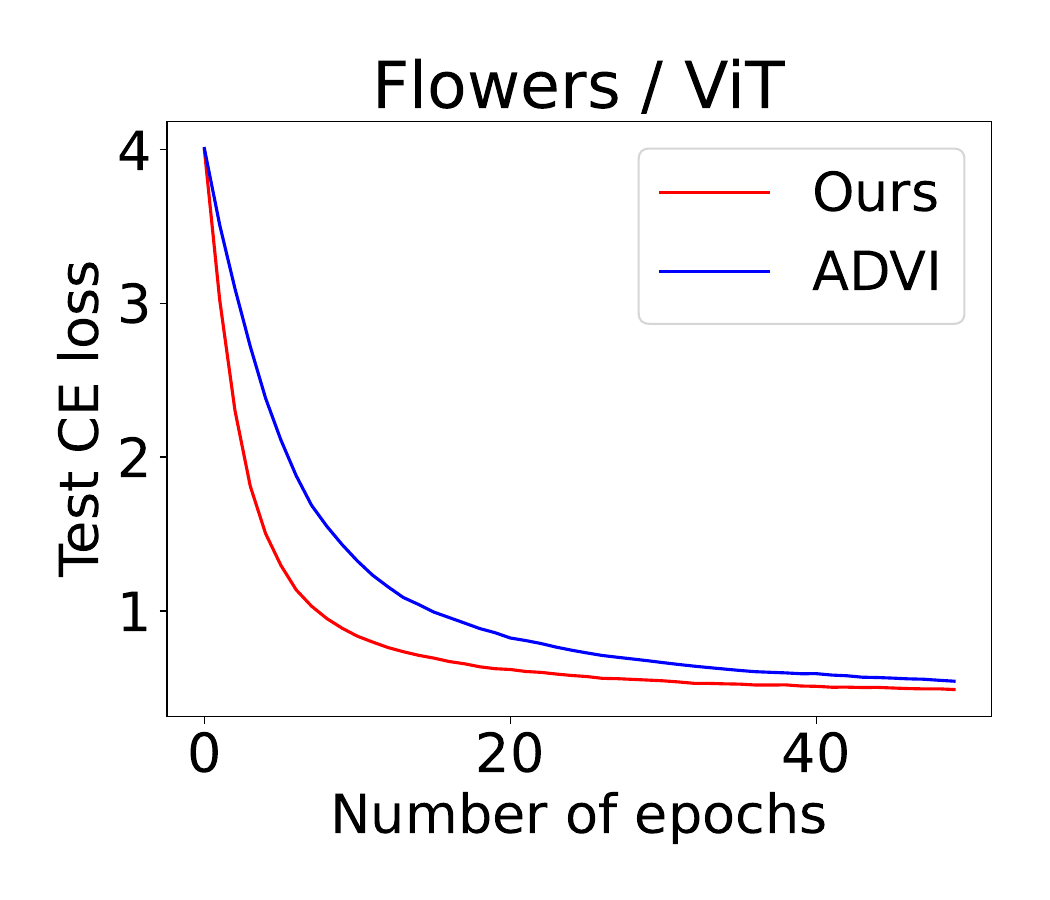}
\
\includegraphics[trim = 4mm 5mm 6mm 4mm, clip, scale=0.195 
]{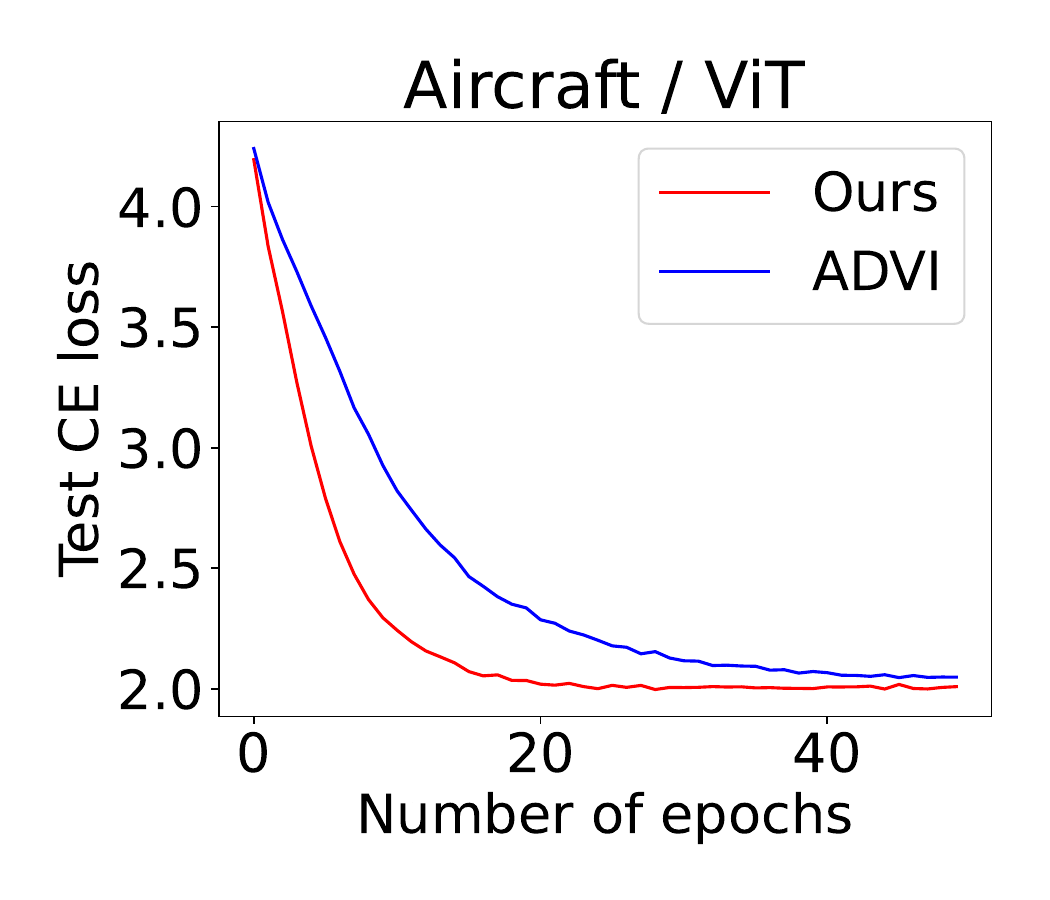}
\
\includegraphics[trim = 4mm 5mm 6mm 4mm, clip, scale=0.195 
]{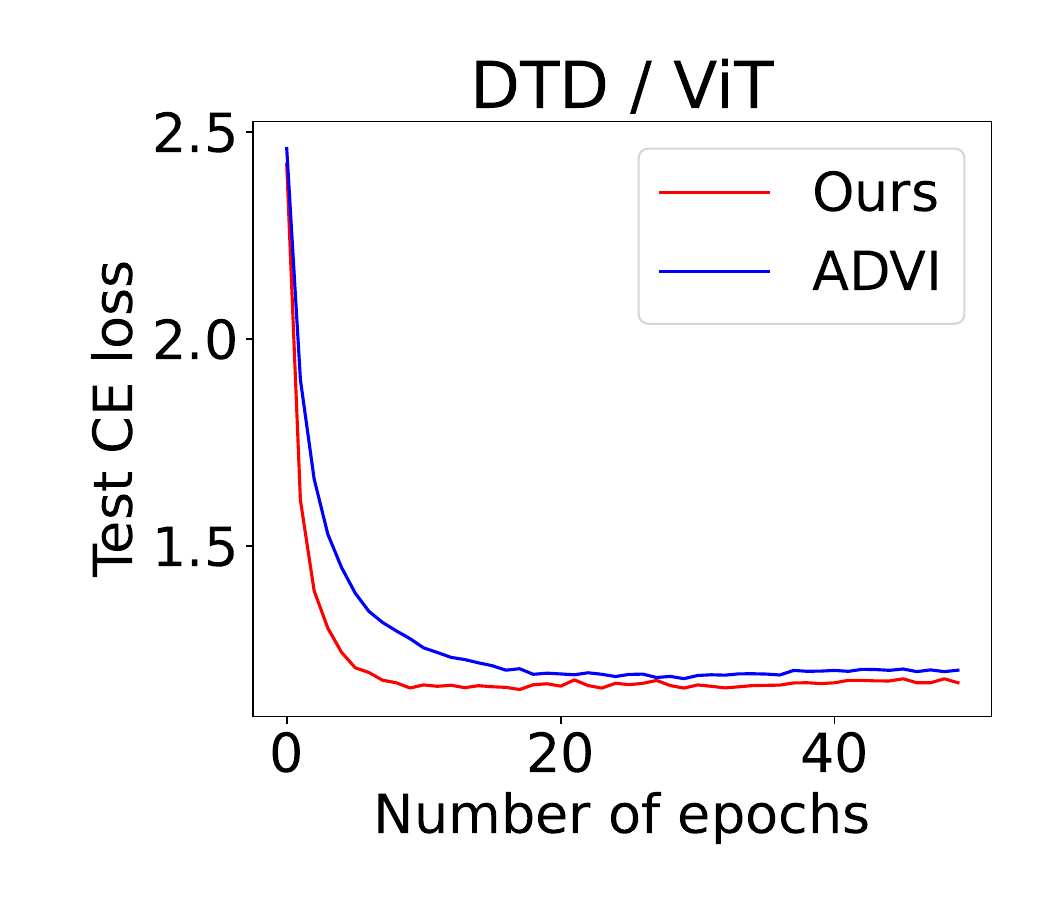}
\end{center}
\vspace{-1.3em}
\caption{
(Large-scale Bayesian deep learning) Convergence in test cross entropy losses.
}
\label{app_fig:vit_convergence}
\end{figure}


\subsection{Running Time vs.~Number of MC Samples}

In the main paper, the default number of MC samples is 5. We vary the number of MC samples, and measure the per-batch running time for the Pets/ViT-L-32 experiments. See 
Table~\ref{apptab:running_time_mc_samples} for the results. 

\begin{table*}[t!]
\setlength{\tabcolsep}{12.35pt}
\caption{
Running time vs.~the number of MC samples on Pets/ViT-L-32. 
}
\centering
\begin{scriptsize}
\centering
\begin{tabular}{ccccccc}
\toprule
Number of MC samples &
1 & 2 & 5 & 10 & 20 & 50 \\ 
\midrule
Per-batch time (seconds) & 0.1013 & 0.1804 & 0.4323 & 0.8119 & 1.5825 & 3.8955 \\
\bottomrule
\vspace{-0.5em}
\end{tabular}
\end{scriptsize}
\label{apptab:running_time_mc_samples}
\end{table*}

\subsection{Statistical Significance}

To verify statistical significance of improvement in our real data experiments, we have conducted the paired $t$ test. Table~\ref{apptab:paired_t_test} shows the $p$-values between our method and the ADVI method on the NLL scores (the lower the better) for 14 sets of real data experiments.
We see that among 14 comparisons, we have 9 cases ($\sim 64\%$) where the $p$-value is less than 0.05, implying statistical significance of the improvement made by our approach against the ELBO-based ADVI.

\begin{table*}[t!]
\setlength{\tabcolsep}{12.35pt}
\caption{
Paired $t$-test between our approach and ADVI on real data experiments.
}
\centering
\begin{scriptsize}
\centering
\begin{tabular}{ccccc}
\toprule
(a) Vision/ResNet-101 & Pets & Flowers & Aircraft & DTD \\ 
\midrule
$p$-value & 0.0019 & 0.2420 & $1.23 \times 10^{-5}$ & 0.0005 \\
\bottomrule
\vspace{-0.5em}
\end{tabular}
\begin{tabular}{ccccc}
\toprule
(b) Vision/ViT-L-32 &Pets & Flowers & Aircraft & DTD \\ 
\midrule
$p$-value & 0.0004 & 0.1100 & 0.4229 & $2.31 \times 10^{-5}$ \\
\bottomrule
\vspace{-0.5em}
\end{tabular}
\begin{tabular}{ccccccc}
\toprule
Bayesian Koopa & ECL & ETTh2 & Ex.~Rate & ILI & Traffic & 
Weather \\ 
\midrule
$p$-value & 0.6524 & 0.2237 & 0.0233 & $6.83 \times 10^{-7}$ & $1.24 \times 10^{-7}$ & 0.0007 \\
\bottomrule
\vspace{-0.5em}
\end{tabular}
%
\end{scriptsize}
\label{apptab:paired_t_test}
\end{table*}

\subsection{Synthetic Model Mismatch Experiments: $\pi(\theta)=$ 
Gaussian Mixture, $q(\theta)=$ Gaussian}\label{app_sec:model_mismatch2}

We have conducted additional experiment of the model mismatch where the variational density family doesn't include the true posterior. In the synthetic setting, 
we set $\dim(\theta)=3$, $\pi(\theta)=$ order-2 Gaussian mixture, and $q(\theta)=$ Gaussian family. The results in Table~\ref{apptab:toy_mismatch} show that all competing methods yield relatively higher forward KL scores (suboptimal solutions) due to the inferior model class, but our approach has achieved the best score among others.

\begin{table*}[t!]
\setlength{\tabcolsep}{12.35pt}
\caption{
Synthetic model mismatch 
with $\pi(\theta)=$ order-2 Gaussian mixture, $q(\theta)=$ Gaussian.
}
\centering
\begin{scriptsize}
\centering
\begin{tabular}{ccccc}
\toprule
& Ours & ADVI & GSM & BaM \\ 
\midrule
forward KL & 0.8339 & 0.9461 & 0.9965 & 1.1914 \\
\bottomrule
\vspace{-0.5em}
\end{tabular}
\end{scriptsize}
\label{apptab:toy_mismatch}
\end{table*}

\begin{table*}[t!]
\setlength{\tabcolsep}{12.35pt}
\caption{(Gaussian target experiment) 
Converged FKL scores and required numbers of iterations. 
}
\centering
\begin{scriptsize}
\centering
\begin{tabular}{cccc}
\toprule
& Ours & Fisher~\citep{fisher} & Score~\citep{score} \\ 
\midrule
Final FKL score & $< 10^{-5}$ & $5.6 \times 10^{-4}$ & $< 10^{-5}$ \\
First iteration$\#$ when FKL $< 10^{-2}$ & 260 & 273 & 284 \\
First iteration$\#$ when FKL $< 10^{-3}$ & 320 & 358 & 362 \\
\bottomrule
\vspace{-0.5em}
\end{tabular}
\end{scriptsize}
\label{apptab:fisher_score}
\end{table*}

\subsection{Comparison with Fisher and Score Methods on Synthetic Data Experiments}

In Sec.~\ref{sec:expmt_gaussian} our approach was mainly compared with ADVI, GSM, and BaM. We additionally conducted experiments with Fisher and Score methods. In this small-scale synthetic data, both methods work equally well, and the final converged values are comparable to our approach. In Table~\ref{apptab:fisher_score}, we show the convergence results by reporting the numbers of iterations required to hit the target forward KL scores. 
Note however that these methods work well on small-scale data, but do not scale to large-scale data and large-scale Bayesian deep networks.



\end{document}